%
\documentclass[runningheads]{llncs}
\usepackage{amssymb,amscd,amsmath,amsfonts,xcolor}
\usepackage{graphicx,bbold,color}
\usepackage{bbold}
\usepackage{mathrsfs}
\usepackage{amstext,stmaryrd}
\newcommand{\Set}[1]{\{#1\}}
\newcommand{\mymod}[1]{\llbracket #1 \rrbracket}
\usepackage{esvect}

\newcommand{\POm}{{\bf 2^\Omega}}
\usepackage{float}
\newtheorem{fact}{Fact}

\usepackage{csquotes}

\newcommand{\ep}{\hfill$\Box$}
\newcommand{\Blp}{\mathscr{B}_{\lambda,P}}
\newcommand{\Blrp}{\mathscr{B}_{\lambda,\rlpsi}}

%
%


\newcommand{\amrand}[2]
{\leavevmode
	\marginpar
	[\raggedleft\scriptsize \leavevmode\normalcolor #1: #2]
	{\raggedright\scriptsize \leavevmode\normalcolor #1: #2}
}

\newcommand{\free}{\POm}


\newcommand{\dkl}{D_{K,L}}



\newcommand{\rlpsi}{R^\lambda_\psi}
\newcommand{\rlpsiuno}{R^\lambda_{\psi_1}}
\newcommand{\rlpsidos}{R^\lambda_{\psi_2}}
\newcommand{\Lang}{\mathcal{L}}
\newcommand{\lfilt}{\mathscr{F}}
\newcommand{\implica}{\rightarrow}
\newcommand\Omit[1]{}
\newcommand{\Bpar}[2]{\mathscr{B}_{{#1},{#2}}}

\usepackage[T1]{fontenc}
%
\usepackage{graphicx}
%
%
\begin{document}
\title{
On Lockean beliefs that are deductively closed and   minimal change}
%
%
\author{Tommaso Flaminio\inst{1}
\and
Lluis Godo\inst{1}
\and
Ram\'on Pino P\'erez\inst{2}
\and
 Lluis Subirana\inst{3}
}
\authorrunning{T. Flaminio et al.}
%
\institute{Artificial Intelligence Research Institute (IIIA - CSIC), Campus de la UAB, Bellaterra, Spain\\
\email{\{tommaso,godo\}@iiia.csic.es}\\
 \and
Centre de Recherche en Informatique de Lens (CRIL), Universit\'e d'Artois, UMR 8188,  Lens, France\\
\email{pinoperez@cril.fr}\\
\and
Department of Mathematics, University of Barcelona (UB), Barcelona, Spain\\ \email{lsubirana@yahoo.com}}
\maketitle              
\begin{abstract}
Within the formal setting of the Lockean thesis, an agent belief set is defined in terms of degrees of confidence and these  are described in probabilistic terms. This approach is of established interest, notwithstanding some limitations that make its use troublesome in some contexts, like, for instance, in belief change theory. Precisely,  Lockean belief sets are not generally closed under (classical) logical deduction. The aim of the present paper is twofold: on one side we provide two characterizations of those belief sets that are closed under classical logic deduction, and on the other we propose an approach to probabilistic update that allows us for a minimal revision of those beliefs, i.e., a revision 
obtained by making the fewest possible changes to the existing belief set while still accommodating the new information. In particular, we show how we can deductively close a belief set via a minimal revision.

\keywords{Lockean thesis  \and Deductive closure \and Belief change \and Minimal revision.}
\end{abstract}
\section{Introduction}
In \cite{Foley1992} Foley introduces what he calls the {\em Lockean thesis}, a philosophical principle suggesting that {\em beliefs} can be defined in terms of {\em confidence}, the latter being  formulated within the (subjective) probabilistic setting. More concretely, given the language $\mathcal{L}$ of classical propositional logic over finitely many variables, Foley claims that it is rational for an agent to believe in a statement, written $\varphi$ in $\mathcal{L}$, provided that the agent's subjective probability of $\varphi$ overcomes a certain threshold.
\begin{description}
\item[Lockean thesis:] For any $\varphi\in \mathcal{L}$, it is rational to believe in $\varphi$ provided that its probability $P(\varphi)$ overcomes a certain threshold $\lambda$.
\end{description}
Given a threshold $\lambda$ and a probability function $P$, one can hence define a {\em Lockean belief set},  as:
\begin{equation}\label{eqBelSets}
\mathscr{B}_{\lambda, P}=\{\varphi\in \mathcal{L}:  P(\varphi)\geq \lambda\}.
\end{equation}
Lockean belief sets represent the epistemic states of rational agents, and they have been considered as a basis for theories of probabilistic belief change. The idea of grounding belief change on Lockean belief sets is in fact not new and it has been proposed and already explored and investigated by others.  
Among them, it is worth recalling the following ones, whose ideas inspired the present paper:  \cite{ShFi19} by Shear and Fitelson; \cite{Hann1,Hann2} both by Hannson;  \cite{Leitgeb} by Leitgeb and \cite{CR19} by Cantwell and Rott who, similarly to what we also propose  in the present paper,  adopt (a variation of) Jeffrey conditionalization to deal with the belief revision process.

Note that the elementary observation that for a rational agent it is more reasonable to believe in a statement $\varphi$ rather than in its negation $\neg\varphi$, forces the threshold $\lambda$ to be  strictly above  $1/2$. This requirement immediately implies that belief sets defined as in (\ref{eqBelSets}) do not contain pairs of contradictory formulas, in the sense that it is not the case that, for any formula $\varphi$,  $\varphi$ and its negation $\neg\varphi$ both belong to a Lockean belief set $\mathscr{B}_{\lambda,P}$.

Although they are not contradictory in the above sense, Lockean belief sets lack some properties that are usually required  in the classical approaches to belief change as developed by  Alchourr\'on,  G\"ardenfors, and  Makinson \cite{AGM}; Katsuno and Mendelzon \cite{KM91} etc. One of them is {\em deductive closure}. Denoting by $\vdash$ the consequence relation of classical propositional logic, this principle reads as follows.
\begin{description}
\item[Deductive closure:] For any belief set $\mathscr{B}$, if $\mathscr{B}\vdash\varphi$, then $\varphi\in \mathscr{B}$.
\end{description} 
As we will see in a while, the deductive closure for a Lockean belief set is equivalent to the apparently weaker principle of  {\em conjunctive closure}.

\begin{description}
\item[Conjunctive closure:] For any belief set $\mathscr{B}$, if $\varphi_1\in \mathscr{B}, \ldots, \varphi_n\in \mathscr{B}$, then $\varphi_1 \wedge \ldots \wedge\varphi_n\in \mathscr{B}$.
\end{description}
The above principle of conjunctive closure  has been  studied in the literature (see \cite{BCF} and \cite{VbHW}) and it is involved in the discussions concerning some  paradoxical situations, the best known being Kyburg's lottery paradox \cite{Kyburg67} and Makinson's preface paradox \cite{Makinson}.

Our current research line aims at understanding how probabilistic belief change theory can be approached in a way that is alternative to those mentioned above and precisely by employing variants of Jeffrey's conditionalization that allow a principle of minimal change. These ideas will be  discussed in Section \ref{sec:5}, while the core of this article is to present results that tell us under which conditions the key property of deductive closure can be safely assumed to hold for Lockean belief sets.

More precisely, we will present two main results, both of which give a characterization for those probability functions $P$ ensuring the existence of suitable thresholds $\lambda$ for which $\Blp$ is closed under logical deduction. These 
are 
the main results of Section \ref{sec:3} and Section \ref{sec:4}, respectively. Then, in Section \ref{sec:5} we discuss how to revise a Lockean belief coherently with an intuitive principle of minimal change. Interestingly, our minimal change desiderata are met by a method that revises a prior probability that is closely related to Jeffrey conditionalization. In the same Section \ref{sec:5} we will show that, indeed, this revision method is {\em minimal} in a sense that will be made clear there. Also in this  in  section we  characterize the condition under which a (non necessarily deductively closed) Lockean belief set becomes deductively closed once revised by a suitable formula. Related works will be discussed in Section \ref{Sec:related}. 
We end this paper with some remarks and suggestions for future work that will be presented in Section \ref{sec:7}. In Section \ref{sec:2} we recall basic logical and algebraic notion and to fix our notation.

\section{Preliminaries}\label{sec:2}
The basic ground of our investigation is classical propositional logic (CPL) and our language $\mathcal{L}$, up to redundancy, is built from a finite set of propositional variables, say $x_1,\ldots, x_n$, connectives $\wedge, \vee, \neg$ for {\em conjunction}, {\em disjunction},  and {\em negation} respectively, and constants $\top$, $\bot$ for {\em true} and {\em false}.  
Formulas in that language will be denoted by lower case Greek letters $\varphi,\psi,\ldots$ with possible subscripts.  The connective of {\em implication} is defined as $\varphi\to\psi=\neg \varphi\vee\psi$, and {\em double implication} is  $\varphi\leftrightarrow\psi=(\varphi\to\psi)\wedge(\psi\to\varphi)$. 

The consequence relation of CPL is denoted by $\vdash$ and hence $\vdash\varphi$ means that $\varphi$ is a {\em theorem}. If $\mathscr{T}$ is a set of formulas and $\varphi$ a formula, $\mathscr{T}\vdash\varphi$ means that $\varphi$ is provable from $\mathscr{T}$ within CPL.  
A set $\mathscr{T}$ of  formulas of $\mathcal{L}$ is said to be {\em deductively closed} (or a {\em theory} in some textbooks), provided that $\varphi\in \mathscr{T}$ iff $\mathscr{T}\vdash\varphi$.
Two formulas $\varphi_1$ and $\varphi_2$ are said to be {\em logically equivalent} or {\em equal up to logical equivalence} whenever $\vdash \varphi_1\leftrightarrow\varphi_2$.

From the semantic point of view, every formula $\varphi$ can be identified with the set $\mymod{\varphi}$ of its models, i.e., those logical valuations $\omega:\mathcal{L}\to\{0,1\}$ such that $\omega(\varphi)=1$, also written 
 {$\omega\models\varphi$.}
If we denote by $\Omega_{\mathcal{L}}$  the set of logical valuations for $\mathcal{L}$ (or simply $\Omega$ when the language $\mathcal{L}$ is clear by the context), the basic algebraic structure that is needed to interpret formulas in the above sense is the finite Boolean algebra of subsets of $\Omega$, $\POm=(2^\Omega, \cap, \cup, ^c, \emptyset, \Omega)$ with the usual set-theoretic operations of intersection, union and complementation, respectively. For any pair of formulas $\varphi$ and $\psi$, we write $\mymod{\varphi}\subseteq\mymod{\psi}$ to denote that the models of $\varphi$ are included into those of $\psi$. Notice that $\varphi$ and $\psi$ are logically equivalent if and only if $\mymod{\varphi}=\mymod{\psi}$.  The set of consistent formulas will be denoted by $\Lang^\ast$.

Although the results we are going to recall below hold in general for finite Boolean algebras, that we assume the reader to be familiar with, we will henceforth only state them for the case of $\POm$. 

Let us start recalling that a non-empty  {proper} subset $F$ of $\POm$ is a {\em filter} if (i) $\mymod{\varphi},\mymod{\psi}\in F$ implies $\mymod{\varphi}\cap \mymod{\psi}\in F$ (conjunctive closure), and (ii) $\mymod{\varphi}\in F$ and $ \mymod{\varphi}\subseteq \mymod{\psi}$, implies that $\mymod{\psi}\in F$ (upward closure). 
Notice that (ii) implies that $\mymod\top\in F$ for every filter $F$, since filters are non-empty.  {The fact that $F$ is a proper subset of $\POm$ implies that $\mymod\bot\not\in F$.}

\Omit{
Equivalently, filters  can be defined as those non-empty subsets $G\subseteq\POm$  such that $\mymod{\varphi}\in G$ and $\mymod{\varphi}\to \mymod{\psi}\in G$, implies that $\mymod{\psi}\in G$. These latter subsets are usually called {\em implicative filters}, and it is known that filters and implicative filters coincide for Boolean algebras \cite{CL}. 
%
For a later use, let us recall the following.
\begin{fact}\label{factPrincFilt} For every filter $F$ of $\POm$, there exists a unique $\mymod{\psi}\in F$ such that $F=\{\mymod{\varphi}\in \POm\mid \mymod{\psi}\subseteq\mymod{\varphi}\}$. 
\end{fact}
In a case as the above, we  say that $F$ is {\em principally generated by $\mymod{\psi}$}, and we will write $F={\uparrow}\mymod{\psi}$, or that $\mymod\psi$ {\em generates} $F$.

Filters of $\POm$ and deductively closed sets of formulas from $\mathcal{L}$ are tightly linked. Indeed, let $F$ be a filter of $\free$ and consider the set 
$$
\mathscr{F}=\{\varphi\in \mathcal{L}\mid \varphi\in \mymod{\gamma}\mbox{ for some }\mymod{\gamma}\in F\}.
$$
Then $\mathscr{F}$ is deductively closed. 

Conversely, let $\mathscr{T}$ be a deductively closed set of formulas of $\mathcal{L}$ and consider 
$$
T=\{\mymod{\varphi}\in \free\mid \varphi\in \mathscr{T}\}.
$$
Then $T$ is a filter of $\free$.

A theory $\mathscr{T}$ is {\em maximally consistent} if for all $\varphi\in \mathcal{L}$ either $\varphi\in \mathscr{T}$ or $\neg\varphi\in \mathscr{T}$. 
{\em Ultrafilters} are filters that are maximal with respect to the usual set-theoretic inclusion. 
For a later use, let us 
recall that if $U$ is an ultrafilter of $\free$ then there exists a unique 
valuation $\omega\in \Omega$ such that $U=\{\mymod{\varphi}\in \free\mid \omega(\varphi)=1\}$. 

To close this section, and following \cite{Paris}, let us recall that a probability distribution on $\Omega$ is a function $P:\Omega\to[0,1]$ such that $\sum_{\omega\in \Omega}P(\omega)=1$. For every formula $\varphi\in \mathcal{L}$, we will hence write $P(\varphi)$ for $\sum_{\omega\in \mymod{\varphi}}P(\omega)$. 
}

 {

Filters of $\POm$ and deductively closed sets of formulas (theories) from $\mathcal{L}$ are tightly linked.
Actually, one can define filters on $\Lang$ in the following way: a subset $\lfilt$ of $\Lang$ is an  $\Lang$-filter if $F=\Set{\mymod{\varphi}\mid \varphi\in\lfilt}$ is a filter on $\POm$.

It is easy to see that $\Lang$-filters $\lfilt$ correspond to consistent theories, that is, subsets of formulas of $\Lang$ which are logically closed and consistent. 
Thus, an $\Lang$-filter $\lfilt$ can be characterized syntactically by: (i) if $\varphi,\psi\in\lfilt$ then $(\varphi\wedge\psi)\in\lfilt$; (ii)
$\varphi\in\lfilt$ and $\vdash(\varphi\implica\psi)$ then $\psi\in\lfilt$; and (iii) $\bot\not\in \lfilt$.
That is to say, an $\Lang$-filter $\lfilt$ is a consistent and deductively closed theory.

Moreover, let $F$ be a filter of $\free$ and consider the set 
$
\mathscr{F}=\{\varphi\in \mathcal{L}\mid \mymod\varphi\in F\}.
$
Then $\mathscr{F}$ is deductively closed and consistent. 
Conversely, let $\mathscr{T}$ be a consistent theory (a consistent deductively closed set of formulas) of $\mathcal{L}$ and consider 
$
F=\{\mymod{\varphi}\in \free\mid \varphi\in \mathscr{T}\}.
$
As we already observed above, $F$ is a filter of $\free$.
Note that by this correspondence, we identify $\Lang$-filters  (consistent theories) with filters of $\free$.

For a later use, let us recall the following.
\begin{fact}\label{factPrincFilt} For every filter $F$ of $\POm$, there exists a unique (up to logical equivalence) $\psi \in {\cal L}$ such that $F=\{\mymod{\varphi}\in \POm\mid \mymod{\psi}\subseteq\mymod{\varphi}\}$. 
\end{fact}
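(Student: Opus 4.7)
The plan is to exploit the finiteness of $\POm$ (which follows from the language having finitely many propositional variables, so $\Omega$ is finite) and the fact that every subset of $\Omega$ is definable by some formula of $\mathcal{L}$.

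First I would build the generator as an intersection. Given a filter $F$, consider the set $A = \bigcap_{\mymod{\varphi}\in F} \mymod{\varphi}$. Since $\POm$ is finite, $F$ is finite, and by the conjunctive closure condition~(i) applied iteratively, $A\in F$. Thus $A$ is the minimum element of $F$ with respect to $\subseteq$. Next, because every subset of the finite set $\Omega$ is the model-set of some formula (for instance, the disjunction of the complete conjunctions of literals picking out each valuation in $A$), there exists $\psi \in \mathcal{L}$ with $\mymod{\psi} = A$. Note that $\psi$ is consistent, since $F$ is a proper subset of $\POm$ so $\mymod{\bot}\notin F$, which forces $A\neq\emptyset$.

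Second, I would verify the claimed equality $F=\{\mymod{\varphi}\in\POm \mid \mymod{\psi}\subseteq\mymod{\varphi}\}$. The inclusion from left to right is immediate from the definition of $A$: any $\mymod{\varphi}\in F$ satisfies $A\subseteq\mymod{\varphi}$, i.e.\ $\mymod{\psi}\subseteq\mymod{\varphi}$. For the reverse inclusion, suppose $\mymod{\psi}\subseteq\mymod{\varphi}$; since $\mymod{\psi}=A\in F$, upward closure~(ii) gives $\mymod{\varphi}\in F$.

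Third, for uniqueness up to logical equivalence, I would argue as follows. Suppose $\psi_1$ and $\psi_2$ both satisfy the statement, so $F=\{\mymod{\varphi}\mid \mymod{\psi_1}\subseteq\mymod{\varphi}\}=\{\mymod{\varphi}\mid \mymod{\psi_2}\subseteq\mymod{\varphi}\}$. Then $\mymod{\psi_1}\in F$ (take $\varphi=\psi_1$ in the first description, since $\mymod{\psi_1}\subseteq\mymod{\psi_1}$), hence by the second description $\mymod{\psi_2}\subseteq\mymod{\psi_1}$. Swapping the roles of $\psi_1$ and $\psi_2$ yields the reverse inclusion, so $\mymod{\psi_1}=\mymod{\psi_2}$, which by the semantic remark recalled in the preliminaries is equivalent to $\vdash \psi_1\leftrightarrow\psi_2$.

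The proof is essentially routine; if anything is a potential obstacle, it is making the finiteness argument in step one fully explicit (ensuring $A$ actually belongs to $F$ rather than only being a lower bound), but this is immediate because the intersection of finitely many filter elements lies in $F$ by iterated application of~(i).
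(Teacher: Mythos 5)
Your proof is correct and complete: the paper itself states this Fact without proof (it is merely ``recalled'' as a standard property of finite Boolean algebras), and your argument --- forming the intersection $A=\bigcap_{\mymod{\varphi}\in F}\mymod{\varphi}$, using finiteness plus conjunctive closure to get $A\in F$, definability of every subset of $\Omega$ by a formula to obtain $\psi$ with $\mymod{\psi}=A$, upward closure for the reverse inclusion, and mutual inclusion of generators for uniqueness up to logical equivalence --- is exactly the standard argument the authors implicitly rely on. Nothing is missing; in particular you correctly made explicit the one point that is sometimes glossed over, namely that $A$ actually lies in $F$ and not merely below it.
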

In such a case, we  say that $F$ is {\em principally generated by $\mymod{\psi}$} or that $\mymod\psi$ {\em generates} $F$, and we will write $F={\uparrow}\mymod{\psi}$. 

A theory $\mathscr{T}$ is {\em maximally consistent} if for all $\varphi\in \mathcal{L}$ either $\varphi\in \mathscr{T}$ or $\neg\varphi\in \mathscr{T}$. 
{\em Ultrafilters} are filters that are maximal with respect to the usual set-theoretic inclusion. It is well known that  maximally consistent theories and ultrafilters are in correspondence \cite{CL2000}. 
For a later use, let us 
recall that if $U$ is an ultrafilter of $\free$ then there exists a unique 
valuation $\omega\in \Omega$ such that $U=\{\mymod{\varphi}\in \free\mid \omega\in\mymod\varphi)\}$.

To close this section, and following \cite{Paris}, let us recall that a probability distribution on $\Omega$ is a mapping $P:\Omega\to[0,1]$ such that $\sum_{\omega\in \Omega}P(\omega)=1$,  and extends to subsets of $\Omega$ by additivity, i.e.\ by letting $P(S) = \sum_{\omega \in S} P(\omega)$ for all non-empty $S \subseteq \Omega$ and  $P(\emptyset) = 0$. A probability distribution is said to be {\em positive} if $P(\omega)>0$ for all $\omega\in \Omega$.  For every formula $\varphi\in \mathcal{L}$, we will hence write $P(\varphi)$ for $\sum_{\omega\in \mymod{\varphi}}P(\omega)$. We henceforth  assume probability functions to be positive.

\section{Lockean belief sets that are deductively closed}\label{sec:3}
In this section we will present a first characterization for Lockean belief sets that are deductively closed in terms of suitable probability functions. The main result will be given by selecting those probability functions $P$ for which there exists a parameter $\lambda>1/2$ such that $\mathscr{B}_{\lambda,P}$ is deductively closed. Due to the identification between theories of $\mathcal{L}$ and filters of $\POm$ we mentioned in the above section,  we will consider Lockean belief sets as subsets of $\POm$ and we will  equivalently say that $\Blp$ is deductively closed or that $\Blp$ is a filter without risk of ambiguity.

\begin{remark} Let us notice that the monotonicity of probability functions implies that every Lockean belief set is upward closed. Indeed, if $\psi\in \Blp$ and $\mymod{\psi}\subseteq\mymod{\varphi}$, then $P(\varphi)\geq P(\psi)\geq\lambda$, whence $\varphi\in \Blp$. Therefore, in order for $\Blp$ to be deductively closed, it is enough that it satisfies conjunctive closure.  
\end{remark}

The first result 
introduces a notational convention that will be often used throughout the paper.
\begin{lemma}\label{lemma1}
    For a positive probability $P$ on $\free$ and $\lambda>1/2$, there exist a minimal $\lambda_m$ and a maximal $\lambda_M$ such that $\lambda_M=P(\psi)$ for some $\psi\in \free$, 
$\lambda_m < \lambda\leq\lambda_M$ and
    $\mathscr{B}_{\lambda',P} = \mathscr{B}_{\lambda,P}$ for every $\lambda' \in (\lambda_m, \lambda_M]$.
\end{lemma}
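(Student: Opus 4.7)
The plan is to exploit the fact that the image of $P$ is finite. Since $\mathcal{L}$ is built over finitely many propositional variables, $\Omega$ is finite, hence so is $\POm$, and therefore $V = \{P(S) : S \in \POm\}$ is a finite subset of $[0,1]$ containing both $0 = P(\emptyset)$ and $1 = P(\Omega)$. The map $\mu \mapsto \mathscr{B}_{\mu,P}$ can only change as $\mu$ crosses a value in $V$, so it is piecewise constant on $(0,1]$, and the task reduces to identifying the maximal plateau of constancy around $\lambda$.

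Concretely, I would set $\lambda_M = \min\{v \in V : v \geq \lambda\}$, which is well defined because $1 \in V$ and $V$ is finite, and then $\lambda_m = \max\{v \in V : v < \lambda_M\}$, which is well defined because $0 \in V$ and $0 < \lambda_M$. By construction $\lambda_M = P(\psi)$ for some $\psi \in \POm$, and $\lambda \leq \lambda_M$. The strict inequality $\lambda_m < \lambda$ follows from the minimality of $\lambda_M$: otherwise $\lambda_m$ would sit in $\{v \in V : v \geq \lambda\}$, contradicting $\lambda_m < \lambda_M$. To prove $\mathscr{B}_{\lambda',P} = \mathscr{B}_{\lambda,P}$ for every $\lambda' \in (\lambda_m, \lambda_M]$, I would observe that the open interval $(\lambda_m, \lambda_M)$ does not meet $V$, by the very definitions of $\lambda_m$ and $\lambda_M$. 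Hence for any formula $\varphi$, since $P(\varphi) \in V$, the inequality $P(\varphi) \geq \lambda'$ is equivalent to $P(\varphi) \geq \lambda_M$, yielding $\mathscr{B}_{\lambda',P} = \mathscr{B}_{\lambda_M,P}$. Specialising to $\lambda' = \lambda$ gives $\mathscr{B}_{\lambda,P} = \mathscr{B}_{\lambda_M,P}$, and hence the equality holds throughout the interval.

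For the minimality of $\lambda_m$ and maximality of $\lambda_M$ I would argue by contradiction: a strictly smaller $\lambda_m'$ would let $\lambda' = \lambda_m$ enter the interval, and any formula $\varphi$ with $P(\varphi) = \lambda_m$ (which exists since $\lambda_m \in V$) lies in $\mathscr{B}_{\lambda_m,P}$ but not in $\mathscr{B}_{\lambda,P}$, breaking the equality; symmetrically, a strictly larger $\lambda_M'$ forces the equality to fail at $\lambda' = \lambda_M'$, because any $\psi$ with $P(\psi) = \lambda_M$ belongs to $\mathscr{B}_{\lambda,P}$ but not to $\mathscr{B}_{\lambda_M',P}$. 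I do not anticipate a substantive obstacle; the argument is essentially the piecewise-constancy of the Lockean thresholding operator on a finite range, and the only care required is tracking which inequalities are strict.
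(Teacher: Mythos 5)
Your proof is correct and follows essentially the same route as the paper's: both exploit the finiteness of the image of $P$ on $\POm$, take $\lambda_M$ to be the least image value $\geq\lambda$ and $\lambda_m$ the greatest image value below it, and observe that no image value lies in $(\lambda_m,\lambda_M)$, so thresholding is constant on that interval. The only differences are cosmetic: you verify the minimality of $\lambda_m$ and maximality of $\lambda_M$ explicitly (which the paper leaves implicit), while the paper additionally caps $\lambda_m$ at $1/2$ via $\lambda_m=\max\{1/2,q_i\}$ to stay within Lockean thresholds.
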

\begin{proof}
Since $\mathcal{L}$ is finite, $\POm$ is finite as well and the image of every probability $P$ on $\POm$ is a finite subset of $[0,1]$ of increasing values, say $\{q_0=0,q_1,q_2,\ldots,q_{t-1}, q_t=1\}$. Thus, for every $\lambda>1/2$, let $0 < i\leq t-1$ such that $q_i <  \lambda\leq q_{i+1}$ and call $\lambda_m=\max\{1/2, q_i\}$ and $\lambda_M=q_{i+1}$. Then, $\psi\in \mathscr{B}_{\lambda,P}$ iff $P(\psi)\geq\lambda$ iff $P(\psi)\geq \lambda'$ for every $\lambda' \in (\lambda_m, \lambda_M)$. In other words, $\mathscr{B}_{\lambda',P} = \mathscr{B}_{\lambda,P}$ for every $\lambda' \in (\lambda_m, \lambda_M]$ and the claim is settled.\ep
\end{proof}


We say that a belief set $\mathscr{B}_{\lambda,P}$ is {\em trivial} when  $\Blp={\uparrow}\mymod\top$. Note that if $P$ is a positive probability and $\Blp$ is not trivial, then $\lambda_M<1$.

An immediate consequence of the above lemma is the following easy remark.

\begin{remark}\label{rem33}
Let $P$ and $\lambda$ be such that $\Blp$ is deductively closed, i.e. $\Blp$ is a filter of $\POm$. Let $\psi$ be such that $\Blp={\uparrow}\mymod{\psi}$. Then, although we cannot ensure that $P(\psi) = \lambda$, we know that $P(\psi) = \lambda_M$, and by the above lemma, $\Blp=\mathscr{B}_{\lambda_M, P}$. 
\end{remark}

Next we establish a result useful in the sequel.

{
\begin{lemma}\label{lemmaNew}

If $\Blp$ is a filter and $P(\psi)=\lambda$ then $\Blp={\uparrow}\mymod\psi$.
\end{lemma}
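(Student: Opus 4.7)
My plan is to exploit the assumption that $\Blp$ is a filter together with the positivity of $P$. First I would invoke Fact~\ref{factPrincFilt}: since $\Blp$ is a filter of $\free$, it is principally generated, so there exists some $\chi\in\mathcal{L}$ (unique up to logical equivalence) with $\Blp={\uparrow}\mymod\chi$. The whole task then reduces to proving $\mymod\chi=\mymod\psi$.

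One inclusion is immediate. Since $P(\psi)=\lambda$, the formula $\psi$ belongs to $\Blp$, hence $\mymod\psi$ belongs to ${\uparrow}\mymod\chi$, which by definition of the upset means $\mymod\chi\subseteq\mymod\psi$. For the converse inclusion, I would argue by contradiction: suppose $\mymod\chi\subsetneq\mymod\psi$. Then there is at least one valuation $\omega\in\mymod\psi\setminus\mymod\chi$. By additivity we have
\[
P(\psi) \;=\; P(\chi) + P(\mymod\psi\setminus\mymod\chi),
\]
and here the crucial step is that $P$ is assumed to be positive, so $P(\omega)>0$ and therefore $P(\mymod\psi\setminus\mymod\chi)>0$. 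Hence $P(\psi)>P(\chi)$.

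On the other hand, $\chi\in\Blp$ gives $P(\chi)\geq\lambda$, and combined with $P(\psi)=\lambda$ this yields $P(\chi)\geq P(\psi)$, contradicting the strict inequality just derived. Thus $\mymod\chi=\mymod\psi$, which gives $\Blp={\uparrow}\mymod\psi$ as required.

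I do not anticipate any real obstacle: the only delicate point is remembering that the argument silently uses the standing positivity assumption on $P$ (without it, one could have $P(\omega)=0$ on the missing valuations and the strict inequality $P(\psi)>P(\chi)$ would fail). Everything else is a one-line application of Fact~\ref{factPrincFilt} plus additivity.
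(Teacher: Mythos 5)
Your proof is correct and follows essentially the same route as the paper's: identify the principal generator of the filter, get one inclusion from $P(\psi)=\lambda$ forcing $\psi\in\Blp$, and rule out a strict inclusion by exploiting the positivity of $P$ to derive $P(\psi)>\lambda$, a contradiction. The only cosmetic difference is that you phrase the contradiction as $P(\psi)>P(\chi)\geq\lambda=P(\psi)$ rather than directly as $P(\psi)>\lambda$; the substance is identical.
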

\begin{proof}
  Let $\varphi$ be such that $\mymod\varphi$ is a generator of the filter $\Blp$.  
 Since  $P(\psi)=\lambda$,  $\psi\in \Blp$. Thus, $\varphi\vdash\psi$, i.e.,
$\mymod\varphi\subseteq\mymod\psi$.  We claim that $\mymod\psi\subseteq \mymod\varphi$ as well.
If not, there exists $\omega$ such that  $\omega\in\mymod\psi$ and $\omega\not \in\mymod\varphi $. Thus, $P(\psi)\geq  P(\varphi)+ P(\omega)> \lambda$, a contradiction.\footnote{ Since $P$ is positive by our general assumption.} Then we have $\mymod\psi=\mymod\varphi$. Therefore, $\Blp={\uparrow}\mymod\psi$. \qed
\end{proof}
}


Let us start our analysis with a first result that collects some basic, yet quite interesting, facts about probabilities and deductively closed sets of formulas, and the proof of which can be found in the Appendix.

\begin{proposition} The following conditions hold:     \vspace{.1cm}

   \noindent(1) Let $P$ and $\lambda$ be such that $\mathscr{B}_{\lambda,P}$ is a filter of $\free$. If $P$ is a homomorphism of $\free$ to $\{0,1\}$, then  $\mathscr{B}_{\lambda,P}$ is maximal  (and in that case $\lambda_M=1$).
    \vspace{.1cm}

    \noindent(2)  $\mathscr{B}_{\lambda,P}$ is a maximal filter iff there is $\omega\in \Omega$ such that $P(\omega)\geq\lambda$. 
    \vspace{.1cm}
    
    \noindent (3) For a positive probability $P$,  {$\mathscr{B}_{\lambda,P}={\uparrow}\mymod\top$} iff $\mathscr{B}_{\lambda,P}=\{\varphi\mid P(\varphi)=1\}$.
\end{proposition}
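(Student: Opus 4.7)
The plan is to address the three items in the order stated, noting that (1) is essentially a special case of (2), and that (3) is a direct consequence of the positivity of $P$.

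For item (1), since a probability distribution that is also a Boolean homomorphism $\free \to \{0,1\}$ takes only the values $0$ and $1$, and since $\lambda > 1/2$, I would argue that the condition $P(\varphi) \geq \lambda$ is equivalent to $P(\varphi) = 1$. Hence $\mathscr{B}_{\lambda,P}$ coincides with the preimage $P^{-1}(\{1\})$, which, by the standard correspondence between Boolean homomorphisms into $\{0,1\}$ and ultrafilters, is a maximal filter. The assertion $\lambda_M = 1$ then follows from Lemma~\ref{lemma1}, since $\lambda_M$ must lie in the range of $P$ and the only value in that range strictly above $1/2$ is $1$.

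For item (2), the $(\Leftarrow)$ direction proceeds by assuming $P(\omega) \geq \lambda$ for some $\omega \in \Omega$. Since $\lambda > 1/2$, such $\omega$ is necessarily unique, and I would show that $\mathscr{B}_{\lambda,P}$ coincides with the principal ultrafilter $U_\omega = \{\mymod\varphi : \omega \in \mymod\varphi\}$. The inclusion $U_\omega \subseteq \mathscr{B}_{\lambda,P}$ follows from monotonicity of $P$: whenever $\omega \in \mymod\varphi$, then $P(\varphi) \geq P(\omega) \geq \lambda$. For the reverse inclusion, if $\omega \notin \mymod\varphi$ then $\omega \in \mymod{\neg\varphi}$, hence $P(\neg\varphi) \geq \lambda > 1/2$, which forces $P(\varphi) < 1/2 < \lambda$ and $\varphi \notin \mathscr{B}_{\lambda,P}$. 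For the $(\Rightarrow)$ direction, if $\mathscr{B}_{\lambda,P}$ is a maximal filter (an ultrafilter), the bijection between ultrafilters of $\free$ and valuations recalled in Section~\ref{sec:2} yields a unique $\omega$ with $\mathscr{B}_{\lambda,P} = U_\omega$; applying this to the characteristic formula $\varphi_\omega$ of $\omega$ (so that $\mymod{\varphi_\omega} = \{\omega\}$) gives $\varphi_\omega \in \mathscr{B}_{\lambda,P}$, whence $P(\omega) = P(\varphi_\omega) \geq \lambda$.

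For item (3), the driving observation is that positivity of $P$ implies $P(\varphi) = 1$ if and only if $\mymod\varphi = \Omega$, i.e., $\varphi$ is a tautology (equivalently $\mymod\varphi \in {\uparrow}\mymod\top$). The $(\Leftarrow)$ direction is then immediate, since ${\uparrow}\mymod\top$ coincides with the set of tautologies, which is exactly $\{\varphi : P(\varphi) = 1\}$. For $(\Rightarrow)$, any $\varphi$ with $P(\varphi) = 1$ satisfies $P(\varphi) \geq \lambda$, so $\varphi \in \mathscr{B}_{\lambda,P} = {\uparrow}\mymod\top$ and therefore $\varphi$ is a tautology; conversely every tautology in ${\uparrow}\mymod\top$ trivially has probability $1$. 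The only subtlety I anticipate is precisely this last step: without the positivity of $P$, a formula can have probability $1$ without being a tautology, so the standing assumption introduced at the end of Section~\ref{sec:2} is essential here.
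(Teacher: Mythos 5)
Your proof is correct and follows essentially the same route as the paper's own (quite terse) argument: reducing (1) to the ultrafilter $P^{-1}(\{1\})$, identifying $\mathscr{B}_{\lambda,P}$ with the principal ultrafilter of the atom at $\omega$ in (2), and using positivity to equate $\{\varphi \mid P(\varphi)=1\}$ with the tautologies in (3). The only difference is that you spell out the details the paper labels ``trivial'' in (2) — the reverse inclusion $\mathscr{B}_{\lambda,P}\subseteq U_\omega$ and the converse direction via the ultrafilter--valuation correspondence — which is a welcome elaboration rather than a deviation.
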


The above result then highlights the following elementary facts:
\begin{enumerate}
    \item If $P$ is a homomorphism of $\free$ to $\{0,1\}$ then $\mathscr{B}_{\lambda,P}$ is a maximal filter of $\free$ and $\lambda=1$, necessarily. Thus, from what we pointed out in  Section \ref{sec:2}, $\mathscr{B}_{\lambda,P}$ corresponds to a maximally consistent  theory. Moreover, all maximally consistent theories are described in this way.
    \item If $P$ is positive and $\lambda=1$, then again $\mathscr{B}_{\lambda,P}$ is deductively closed, but it corresponds to the trivial theory containing only logical theorems.
\end{enumerate}
It now remains to study the general situation in which $\Blp$ is a proper filter of $\free$ that is neither maximal, nor a singleton. 

Given what we recalled in the above section, for $P$ and $\lambda$ as above, $\mathscr{B}_{\lambda,P}$ is deductively closed if and only if $\Blp$ is a filter of $\free$ and hence, by Fact \ref{factPrincFilt},  there exists a unique $\psi\in \mathcal{L}$ such that $\Blp={\uparrow}\mymod{\psi}$. 

In general, we can prove the following result that provides  a first characterization of those Lockean belief sets that are deductively closed.  In the statement of the next result, and henceforth, we will adopt the notation of Lemma \ref{lemma1}.

\begin{theorem}\label{propFil1}
    For a positive probability $P:\free\to[0,1]$ and $\lambda>1/2$, $\Blp$ is deductively closed iff there exists $\psi\in \Blp$ such that $P(\psi)=\lambda_M$ and $1-\lambda_M<\min\{P(\omega)\mid \omega\in\mymod{\psi}\}$ and, in such a case, $\Blp={\uparrow}\mymod{\psi}$. 
\end{theorem}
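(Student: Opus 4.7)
The plan is to prove both directions by leveraging the discrete character of the image of $P$ (as made explicit in Lemma~\ref{lemma1}), Fact~\ref{factPrincFilt}, and a simple contrapositive trick using the formula whose model set is the complement of a single world.

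For the forward direction, assume $\Blp$ is deductively closed, i.e.\ a filter of $\POm$. By Fact~\ref{factPrincFilt} there is a unique (up to logical equivalence) $\psi$ with $\Blp={\uparrow}\mymod{\psi}$, and by Remark~\ref{rem33} we have $P(\psi)=\lambda_M$. To establish the mass bound, I argue by contradiction: suppose there is $\omega_0\in\mymod{\psi}$ with $P(\omega_0)\leq 1-\lambda_M$, and let $\varphi$ be any formula with $\mymod{\varphi}=\Omega\setminus\{\omega_0\}$. Then $P(\varphi)=1-P(\omega_0)\geq\lambda_M\geq\lambda$, so $\varphi\in\Blp$; but $\omega_0\in\mymod{\psi}$ while $\omega_0\notin\mymod{\varphi}$, contradicting $\Blp={\uparrow}\mymod{\psi}$.

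For the converse, assume $\psi\in\Blp$ satisfies $P(\psi)=\lambda_M$ and $1-\lambda_M<\min\{P(\omega)\mid \omega\in\mymod{\psi}\}$. I show $\Blp={\uparrow}\mymod{\psi}$; since the right-hand side is a filter, this yields deductive closure. The inclusion ${\uparrow}\mymod{\psi}\subseteq \Blp$ is immediate from monotonicity: if $\mymod{\psi}\subseteq\mymod{\varphi}$, then $P(\varphi)\geq P(\psi)=\lambda_M\geq\lambda$. For the other inclusion, take $\varphi\in \Blp$. Since the image of $P$ is the discrete set $\{q_0,\ldots,q_t\}$ of Lemma~\ref{lemma1} and $\lambda>q_i$, in fact $P(\varphi)\geq q_{i+1}=\lambda_M$. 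If $\mymod{\psi}\not\subseteq\mymod{\varphi}$, pick $\omega_0\in\mymod{\psi}\setminus\mymod{\varphi}$. Then
\[
P(\varphi)\leq 1-P(\omega_0)<1-(1-\lambda_M)=\lambda_M,
\]
contradicting $P(\varphi)\geq\lambda_M$. Hence $\mymod{\psi}\subseteq\mymod{\varphi}$, so $\varphi\in{\uparrow}\mymod{\psi}$.

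The main obstacle is the easily-overlooked step in the converse that upgrades $P(\varphi)\geq\lambda$ to $P(\varphi)\geq\lambda_M$; without this, the single-world contradiction does not close. Everything else is a matter of packaging monotonicity and the ``complement of one world'' gadget.
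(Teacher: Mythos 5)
Your proof is correct and follows essentially the paper's own argument: the forward direction (generator via Fact~\ref{factPrincFilt}, $P(\psi)=\lambda_M$ via Remark~\ref{rem33}, and the ``complement of one world'' contradiction) is the same, and your converse establishes the same inclusion $\Blp\subseteq{\uparrow}\mymod{\psi}$, merely replacing the paper's decomposition $P(\tau)=P(\tau\wedge\psi)+P(\tau\wedge\neg\psi)<P(\psi)=\lambda_M$ by the direct bound $P(\varphi)\leq 1-P(\omega_0)<\lambda_M$, after correctly upgrading $P(\varphi)\geq\lambda$ to $P(\varphi)\geq\lambda_M$ via Lemma~\ref{lemma1}. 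This is only a cosmetic streamlining, not a genuinely different route.
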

\begin{proof}
 By Lemma \ref{lemma1}, $\Blp=\mathscr{B}_{\lambda_M, P}$, and hence  we will prove the theorem considering $\mathscr{B}_{\lambda_M, P}$ without loss of generality. 
  
    Let us assume that $\mathscr{B}_{\lambda_M, P}$ is deductively closed, and hence a filter of $\POm$,  and let $\mymod{\psi}$ be  its generator, i.e., $\mathscr{B}_{\lambda_M, P}={\uparrow}\mymod{\psi}$.
    Towards a contradiction, assume  there exists $\omega\in \mymod{\psi}$ such that $P(\omega)\leq 1-\lambda_M$. Then, take $\gamma$ such that $\mymod{\gamma}=
    \Omega\setminus\{\omega\}$. Clearly $\mymod\gamma\not\supseteq\mymod\psi$ whence $\gamma$ does not belong to $\mathscr{B}_{\lambda_M, P}={\uparrow}\mymod\psi$, while 
    $$
    P({\gamma})
    \geq1-(1-\lambda_M)=\lambda_M,$$ contradicting the fact that filters are upward closed.

    Conversely, assume that $P({\psi})=\lambda_M$ and $1-\lambda_M<\min\{P(\omega)\mid \omega\in\mymod{\psi}\}$. The latter implies that for all $\delta, \gamma$ such that $\mymod{\delta}\subseteq\mymod{\psi}^c$ and $\mymod{\gamma}\subseteq\mymod{\psi}$, 
    \begin{equation}\label{eqPorpMain}
    P({\delta})\leq 1-\lambda_M<\min\{P(\omega)\mid \omega\in\mymod{\psi}\}\leq P({\gamma}).
    \end{equation}
    Therefore, if $\mymod{\tau}\not\supseteq\mymod{\psi}$, one has $P({\tau})=P(({\tau}\wedge\neg{\psi})\vee({\tau}\wedge{\psi}))=P({\tau}\wedge\neg{\psi})+P({\tau}\wedge{\psi})<P({\psi}\wedge\neg{\tau})+P({\tau}\wedge{\psi})$ by (\ref{eqPorpMain}) because $\mymod{\tau\wedge\neg\psi}\subseteq\mymod{\psi}^c$, while $\mymod{\psi\wedge\neg\tau}\subseteq\mymod{\psi}$. Now,  $P(\psi\wedge\neg \tau)+P({\tau}\wedge{\psi})=P({\psi})=\lambda_M$ and hence $\tau\not\in \mathscr{B}_{\lambda_M, P}$. Equivalently, for all $\rho$, if $\rho\in \mathscr{B}_{\lambda_M, P}$ then $\mymod{\rho}\supseteq\mymod{\psi}$, i.e.,  $\mathscr{B}_{\lambda_M, P}={\uparrow}\mymod{\psi}$ and hence it is a filter. \ep
\end{proof}


\begin{corollary} If $\lambda > 1/2$ and  $\Blp={\uparrow}\mymod{\psi}$, and hence deductively closed, with $|\mymod{\psi}| = n$, then $\lambda_M > n/(n+1)$. Hence $P(\psi) > n/(n+1)$ and $P(\neg \psi) < 1/n$.    
\end{corollary}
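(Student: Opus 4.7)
The plan is to apply Theorem \ref{propFil1} directly and then do a short arithmetic manipulation. By Theorem \ref{propFil1}, since $\Blp$ is deductively closed and generated by $\mymod{\psi}$, we know that $P(\psi) = \lambda_M$ and that
\[
1 - \lambda_M < \min\{P(\omega) \mid \omega \in \mymod{\psi}\}.
\]

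Next I would use additivity of $P$ together with the assumption $|\mymod{\psi}| = n$. Since $P(\psi) = \sum_{\omega \in \mymod{\psi}} P(\omega)$, the sum is bounded below by $n$ times the minimum, giving
\[
\lambda_M = P(\psi) \;\geq\; n \cdot \min\{P(\omega) \mid \omega \in \mymod{\psi}\} \;>\; n(1-\lambda_M).
\]
Rearranging $\lambda_M > n(1-\lambda_M)$ yields $(n+1)\lambda_M > n$, hence $\lambda_M > n/(n+1)$, as desired.

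Finally, since $P(\psi) = \lambda_M > n/(n+1)$, we get $P(\neg\psi) = 1 - P(\psi) < 1 - n/(n+1) = 1/(n+1) \leq 1/n$, which proves the last assertion.

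The only mildly subtle step is recognising that the strict inequality $1 - \lambda_M < \min P(\omega)$ from Theorem \ref{propFil1} is strict, so that multiplying by $n$ preserves strictness and yields $\lambda_M > n/(n+1)$ rather than just $\lambda_M \geq n/(n+1)$; everything else is routine.
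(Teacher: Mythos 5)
Your proof is correct and follows essentially the same route as the paper: both invoke Theorem \ref{propFil1} (with $P(\psi)=\lambda_M$ and $1-\lambda_M<\min\{P(\omega)\mid\omega\in\mymod{\psi}\}$) to obtain $P(\psi)>n(1-\lambda_M)$ and then rearrange, the paper via $1>(n+1)(1-\lambda_M)$ and you via $(n+1)\lambda_M>n$, which is the same arithmetic. The final step $P(\neg\psi)=1-\lambda_M<1/(n+1)\leq 1/n$ is also fine.
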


\begin{proof} By the above theorem, $P(\psi) = \sum_{\omega\in \mymod{\psi}} P(\omega) > n(1- \lambda_M)$, hence $1 = P(\mymod{\psi}) + P(\mymod{\psi}^c) > n(1-\lambda_M) + (1-\lambda_M) = (n+1)(1-\lambda_M)$,  and it follows that $\lambda_M > n/(n+1)$. \ep
\end{proof}


The intuition behind the previous  results is that, for a positive probability $P$ and a threshold $\lambda$ to define a deductively closed set of formulas $\Blp$ one has to ensure:
\begin{itemize}
\item the existence of a formula $\psi$ such that $P(\psi)=\lambda$ (or $P(\psi)$ is greater than, yet sufficiently close to $\lambda$);
\item that $\psi$ is unique and this property can be ensured by requiring that the probability distribution that gives $P(\psi)=\lambda$ is sufficiently low on $\neg\psi$ as required in the hypothesis of Proposition \ref{propFil1};
\item as a consequence, for instance, if $P$ is the counting measure on $\free$ (i.e., it comes from the uniform distribution on $\Omega$), then for no $\lambda$ one has that $\Blp$ is deductively closed, unless  {$\lambda>\frac{n-1}n$, where $n$ is the cardinal of $\free$}. 
\end{itemize}
So, examples of probability distributions that, on the other hand, ensure that $\Blp$ is deductively closed are those $P$ for which there exists  $\omega\in \Omega$ such that $P(\omega)>\sum_{\omega'\neq \omega}P(\omega')$. In this case if $P(\omega)>1/2$ then $\Blp$ is a maximal filter for $\lambda=P(\omega)$.

\section{Step probabilities: a second characterization result}\label{sec:4}
The characterization provided in Theorem \ref{propFil1} will be made more clear in the following Theorem \ref{ThmChar2}. Let us begin by defining a class of probability functions that will play a central role in what follows.
\begin{definition}
    Let $P$ be a  probability function on $\POm$ and let $\omega\in \Omega$. Then $P$ is said to have an {\em $\omega$-step} (or to have {\em a step at $\omega$}), provided that it satisfies 
    \begin{itemize}
\item[($\omega$S)] $P(\omega)>\sum_{\omega': P(\omega')<P(\omega)}P(\omega')>0$.
    \end{itemize}
     A probability $P$ is said to be a {\em step probability} if it has an $\omega$-step for some $\omega\in \Omega$. \footnote{The name ``step probability'' is inspired by {\em big-stepped probabilities} introduced in \cite{Ben-Dub-Prad-99} and also considered in \cite{Leitgeb}; these ideas will be briefly recalled in Section \ref{Sec:related} below.}
\end{definition}

A probability function might have more than one step as the following example shows.
\begin{example}\label{exSteps}
Consider $P$ defined on  $\Omega=\{\omega_1,\omega_2,\omega_3,\omega_4\}$ such that $P(\omega_1)=P(\omega_4)=0.05$, $P(\omega_2)=0.3$ and $P(\omega_3)=0.6$. The probability $P$ has the following steps:
\begin{itemize}
    \item A step at $\omega_3$ because $0.6=P(\omega_3)>P(\omega_1)+P(\omega_2)+P(\omega_4)=0.4$;
    \item A step at $\omega_2$ because $0.3=P(\omega_2)>P(\omega_1)+P(\omega_4)=0.1$.   \ep
\end{itemize}
\end{example}

 Let $P$ be a probability that has a step at $\omega$. Call  $\Phi_\omega$ a formula such that

\begin{equation}\label{eqPsiAlpha}
\mymod{\Phi_\omega}=\{\omega'\in \Omega\mid P(\omega')\geq P(\omega)\}.
    \end{equation}
 

Continuing the above Example \ref{exSteps}, we have that
\begin{itemize}
    \item $\mymod{\Phi_{\omega_3}}=\Set{\omega_3}$;
    \item $\mymod{\Phi_{\omega_2}}=\Set{\omega_2, \omega_3}$.
\end{itemize}
\begin{proposition}\label{propAlphaStep}
If $P$ has an $\omega$-step, then $1 >  P(\Phi_\omega)>1/2$.
\end{proposition}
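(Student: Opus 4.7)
The plan is to unfold the definitions of $P(\Phi_\omega)$ and the $\omega$-step condition, and then to read off both inequalities directly, with essentially no computation beyond that.

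First I would observe that, by the definition of $\Phi_\omega$ in equation (\ref{eqPsiAlpha}), the set $\mymod{\Phi_\omega}^c = \Omega \setminus \mymod{\Phi_\omega}$ is exactly $\{\omega' \in \Omega : P(\omega') < P(\omega)\}$. Hence, by additivity,
\[
P(\neg \Phi_\omega) = \sum_{\omega' : P(\omega') < P(\omega)} P(\omega').
\]
The $\omega$-step condition ($\omega$S) then reads $P(\omega) > P(\neg \Phi_\omega) > 0$.

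For the lower bound $P(\Phi_\omega) > 1/2$, I would note that $\omega \in \mymod{\Phi_\omega}$ (since $P(\omega) \geq P(\omega)$), so $P(\Phi_\omega) \geq P(\omega) > P(\neg \Phi_\omega)$. Combined with $P(\Phi_\omega) + P(\neg \Phi_\omega) = 1$, this yields $P(\Phi_\omega) > 1/2$.

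For the upper bound $P(\Phi_\omega) < 1$, the strict positivity clause $P(\neg \Phi_\omega) > 0$ in ($\omega$S) gives the result immediately, again via $P(\Phi_\omega) = 1 - P(\neg \Phi_\omega)$. I do not expect any real obstacle: the whole argument is a careful bookkeeping of the step inequality. The only subtlety worth flagging is that the strict positivity on the right side of ($\omega$S) is what guarantees $P(\Phi_\omega) \neq 1$; without it one could only conclude $P(\Phi_\omega) \leq 1$, which would be vacuous.
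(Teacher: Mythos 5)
Your proof is correct and follows essentially the same route as the paper's: identify $P(\neg\Phi_\omega)$ with $\sum_{\omega':P(\omega')<P(\omega)}P(\omega')$, use $P(\Phi_\omega)\geq P(\omega)>P(\neg\Phi_\omega)$ for the lower bound, and the strict positivity clause of ($\omega$S) for the upper bound. No gaps.
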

\begin{proof}
   By the very definition of $\omega$-step and $\Phi_\omega$, one has that 
    $$
    P(\Phi_\omega)\geq P(\omega)>cc=\sum_{\omega'\not\in \mymod{\Phi_\omega}}P(\omega')=P(\neg\Phi_\omega).
    $$
    It follows that $P(\Phi_\omega)>P(\neg\Phi_\omega)$,
therefore $P(\Phi_\omega)>1/2$. Moreover, $P$ having an step at $\omega$ also implies  $0 < \sum_{P(\omega')<P(\omega)}P(\omega') = P(\neg \Phi_\omega)$, from where it also follows that $P(\Phi_\omega) < 1$. 
\ep
\end{proof}
Notice that, if $P$ has an $\omega$-step, then $P(\omega)$ necessarily coincides with the minimum value among the $P(\omega')$'s for $\omega'\in\mymod{\Phi_\omega}$. Such an $\omega$ for which $P(\omega)$ reaches that minimum is not supposed to be unique; in any case the next result holds.
\begin{lemma}\label{lemmaAlphaStep}
If $P$ has an $\omega$-step then $P(\omega)=\min\{P(\omega')\mid \omega'\in \mymod{\Phi_\omega}\}$.
\end{lemma}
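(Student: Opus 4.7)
The claim is in fact almost immediate from the defining equation (\ref{eqPsiAlpha}) of $\Phi_\omega$, and the $\omega$-step hypothesis plays no active role in the argument beyond guaranteeing that $\Phi_\omega$ is meaningfully singled out. The plan is simply to establish the two inequalities $\min\{P(\omega')\mid \omega'\in \mymod{\Phi_\omega}\} \leq P(\omega)$ and $\min\{P(\omega')\mid \omega'\in \mymod{\Phi_\omega}\} \geq P(\omega)$ separately.

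For the first inequality, I would note that $\omega$ itself satisfies $P(\omega) \geq P(\omega)$, so $\omega \in \mymod{\Phi_\omega}$ by (\ref{eqPsiAlpha}). Consequently, $P(\omega)$ is one of the values in the set over which the minimum is taken, and therefore the minimum is at most $P(\omega)$.

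For the reverse inequality, I would take an arbitrary $\omega' \in \mymod{\Phi_\omega}$; directly from (\ref{eqPsiAlpha}), this means $P(\omega') \geq P(\omega)$. Since this holds for every element of $\mymod{\Phi_\omega}$, the minimum over this set is at least $P(\omega)$. Combining both bounds yields the equality.

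There is no real obstacle here: the lemma amounts to unpacking the defining condition of $\Phi_\omega$. The only thing worth flagging explicitly in the write-up is that the $\omega$-step assumption ensures $\mymod{\Phi_\omega}$ is a non-trivial proper subset of $\Omega$ (so the minimum is taken over a non-empty set, which in our finite setting is automatic the moment $\omega \in \mymod{\Phi_\omega}$), keeping the statement consistent with the surrounding discussion.
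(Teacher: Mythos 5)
Your proof is correct and is exactly the immediate argument the paper has in mind (the lemma is stated without proof there, as it follows directly from the definition of $\Phi_\omega$ in (\ref{eqPsiAlpha})): $\omega\in\mymod{\Phi_\omega}$ gives the upper bound, and the defining condition $P(\omega')\geq P(\omega)$ for all $\omega'\in\mymod{\Phi_\omega}$ gives the lower bound. Your remark that the $\omega$-step hypothesis is not actively used is also accurate.
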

We can now prove the following characterization result.

\begin{theorem}\label{ThmChar2}
For every positive probability $P$, there exists $ 1 > \lambda>1/2$ such that $\Blp$ is deductively closed and not trivial iff $P$ has an $\omega$-step. 
\end{theorem}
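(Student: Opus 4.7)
My plan is to prove the biconditional by applying Theorem \ref{propFil1} in both directions, using the formula $\Phi_\omega$ from \eqref{eqPsiAlpha} as the natural generator of the filter.

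For the ($\Leftarrow$) direction, suppose $P$ has an $\omega$-step. I would set $\lambda := P(\Phi_\omega)$. Proposition \ref{propAlphaStep} gives $1 > \lambda > 1/2$ and in particular non-triviality (since $\lambda < 1$ forces $\mymod{\Phi_\omega} \neq \Omega$, so the resulting filter cannot be ${\uparrow}\mymod\top$). Because $\lambda$ lies in the image of $P$, Lemma \ref{lemma1} yields $\lambda_M = \lambda = P(\Phi_\omega)$. To invoke Theorem \ref{propFil1} with $\psi := \Phi_\omega$, I must check $1 - \lambda_M < \min\{P(\omega') : \omega' \in \mymod{\Phi_\omega}\}$. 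By Lemma \ref{lemmaAlphaStep}, the minimum equals $P(\omega)$, and by construction of $\Phi_\omega$ we have $1 - \lambda_M = P(\neg\Phi_\omega) = \sum_{\omega' : P(\omega') < P(\omega)} P(\omega')$, which is strictly less than $P(\omega)$ by the $\omega$-step condition. Theorem \ref{propFil1} then gives $\Blp = {\uparrow}\mymod{\Phi_\omega}$, which is deductively closed and non-trivial.

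For the ($\Rightarrow$) direction, assume some $\lambda \in (1/2, 1)$ makes $\Blp$ deductively closed and non-trivial. Theorem \ref{propFil1} supplies a $\psi$ with $\Blp = {\uparrow}\mymod\psi$, $P(\psi) = \lambda_M$, and $1 - \lambda_M < \min\{P(\omega') : \omega' \in \mymod\psi\}$; non-triviality forces $\mymod\psi \subsetneq \Omega$, so $1 - \lambda_M > 0$. Pick $\omega^* \in \mymod\psi$ attaining that minimum, i.e.\ $P(\omega^*) = \min\{P(\omega') : \omega' \in \mymod\psi\}$. The crucial observation is the inclusion
\[
\{\omega' \in \Omega : P(\omega') < P(\omega^*)\} \subseteq \mymod\psi^c,
\]
because no element of $\mymod\psi$ has probability strictly less than the minimum $P(\omega^*)$. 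Summing gives
\[
\sum_{\omega' : P(\omega') < P(\omega^*)} P(\omega') \le P(\neg\psi) = 1 - \lambda_M < P(\omega^*),
\]
and positivity of $P$ together with $\mymod\psi^c \neq \emptyset$ ensures the sum is strictly positive. This is exactly the $\omega^*$-step condition.

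The main obstacle, if any, is the forward direction's identification of the candidate $\omega^*$ and the verification that every valuation outside $\mymod\psi$ has probability strictly below $P(\omega^*)$; this is what bridges the gap from Theorem \ref{propFil1}'s ``probability mass concentrated on $\mymod\psi$'' to the single-element ``one atom dominates the tail'' formulation of an $\omega$-step. Everything else is a routine rearrangement once $\Phi_\omega$ (resp.\ $\omega^*$) is chosen as the natural pivot.
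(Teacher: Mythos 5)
Your proof is correct and follows essentially the same route as the paper: both directions reduce to Theorem~\ref{propFil1}, taking $\lambda=P(\Phi_\omega)$ in the ``if'' direction and, in the ``only if'' direction, taking the step point to be a model of the generator $\psi$ of minimal probability. The only (harmless) difference is that where the paper proves the identity $\mymod{\psi}=\mymod{\Phi_{\omega^*}}$, you just bound the tail by $\sum_{P(\omega')<P(\omega^*)}P(\omega')\le P(\neg\psi)=1-\lambda_M<P(\omega^*)$; do note that the strict positivity of that sum rests on the fact that every $\omega'\notin\mymod{\psi}$ satisfies $P(\omega')\le P(\neg\psi)<P(\omega^*)$ and hence lies in the index set (positivity of $P$ and $\mymod{\psi}^c\neq\emptyset$ alone would not suffice), an observation already implicit in the inequalities you wrote.
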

\begin{proof}
Let us begin by showing the ``if'' part. Assume that $P$ has an $\omega$-step. Then, by Proposition \ref{propAlphaStep}, $1 > P(\Phi_\omega)>1/2$. Take $\lambda=P(\Phi_\omega)$. By Lemma \ref{lemmaAlphaStep} and the definition of $\omega$-step one has
$$
\min\{P(\omega')\mid \omega'\in\mymod{\Phi_\omega}\}=P(\omega)>\sum_{P(\omega')<P(\omega)}P(\omega')=P(\neg\Phi_\omega)=1-\lambda.
$$
Therefore, since $\lambda=P(\Phi_\omega)$,  by Lemma \ref{lemma1}, $\lambda=\lambda_M$ and hence $\Blp=\mathscr{B}_{\lambda_M, P}$. By Theorem \ref{propFil1}, we have $\Blp=\uparrow\mymod{\Phi_\omega}$. And so, clearly $\Blp$ is deductively closed and not trivial. 


Conversely, let $ 1 > \lambda>1/2$ and assume that $\Blp$ is deductively closed and not trivial.  By Theorem \ref{propFil1} and the assumption of not triviality of $\Blp$, there exists $\psi$ such that $P(\psi)=\lambda_M<1$ and $\tau=\min\{P(\omega)\mid \omega\in\mymod{\psi}\}>1-\lambda_M=P(\neg\psi)$. Then let $\omega$ in $\Omega$ be where $P$ attains the  value $\tau$. 
\begin{claim}\label{claim1} $\psi=\Phi_\omega$.
\end{claim}
\begin{proof}(of the Claim)
By definition $\mymod{\Phi_\omega}=\{\omega': P(\omega')\geq P(\omega)=\tau\}$ and hence $\mymod\psi\subseteq\mymod{\Phi_\omega}$ because, if $\omega'\in\mymod{\psi}$, then $P(\omega')\geq\tau$. 
Towards a contradiction, assume that $\mymod{\Phi_\omega}\not\subseteq\mymod\psi$.  Then there exists $\omega'$ such that $\omega'\in\mymod{\Phi_\omega}$  and $\omega'\not\in\mymod\psi$. But since $\omega'\in\mymod{\Phi_\omega}$ then $P(\omega')\geq\tau>1-\lambda_M$, and since $\omega'\not\in\mymod\psi$ then $P(\omega') \leq P(\neg\psi) =1-\lambda_M$, contradiction. Hence, the claim is settled. \ep
\end{proof}
Now we go back to the proof of Theorem \ref{ThmChar2} and we prove that $P$ has a step at $\omega$. Indeed, by the above claim, 
$P(\psi)=P(\Phi_\omega)=\lambda_M < 1$ and hence, 
$$
P(\omega)=\tau>1-\lambda_M=P(\neg \psi)= P(\neg\Phi_\omega) =\sum_{P(\omega')<P(\omega)}P(\omega')  > 0.
$$
Thus, $P$ has a step at $\omega$ and this settles the claim. \ep
\end{proof}
The proof of the above theorem  shows that a probability function $P$ with a step at $\omega$ determines the deductively closed set $$
\mathscr{B}_{P(\Phi_\omega), P}=\{\varphi\mid P(\varphi)\geq P(\Phi_\omega)\}
$$ 
 for $\lambda=P(\Phi_\omega)$, where $\Phi_\omega$ is defined as in (\ref{eqPsiAlpha}). 
Indeed,  a probability $P$ determines as many deductively closed sets as steps it has. 
\begin{example}
    Let us consider again the probability $P$ from Example \ref{exSteps}. The two steps at $\omega_3$ and $\omega_2$ determine the deductively closed sets:
    \begin{itemize}
        \item $\Bpar{0.6}{P}=\{\varphi\mid P(\varphi)\geq P(\omega_3)=0.6\}=\{\varphi\mid \omega_3\in\mymod\varphi\}$;
        \item $\Bpar{0.9}{P}=\{\varphi\mid P(\varphi)\geq P(\Set{\omega_2,\omega_3})=0.9\}=\{\varphi\mid \Set{\omega_2,\omega_3}\subset\mymod\varphi\}$.
    \end{itemize}
    Notice that $\Bpar{0.6}{P}$ is maximally consistent, while $\Bpar{0.9}{P}$ is not and in fact $\Bpar{0.9}{P}$ is strictly contained in $\Bpar{0.6}{P}$. 
\end{example}

\section{Minimal change of Lockean belief sets}\label{sec:5}
Now that we have established under which precise conditions a probability $P$ ensures that a Lockean belief set $\Blp$ is deductively closed we  can put forward a first preliminary analysis on revising $\Blp$ by a formula $\psi$.

Given the probabilistic setting of our approach, a natural way to revise a Lockean belief set $\Blp$ by a formula  $\psi$ (and hence getting $\Blp*\psi$) is relying on the conditional probability $P(\cdot\mid \psi)=P_\psi(\cdot)$, and hence defining  $\Blp\ast\psi=\{\varphi\in \mathcal{L}\mid P_\psi(\varphi)\geq\lambda\}$. This method, which has been already considered by several authors, and notably in \cite{Garden86}, does not generally fit with a basic, yet only informally expressible principle  of {\em minimal change} stipulating that, when revising beliefs, one should make the fewest possible changes to the existing belief set while still accommodating the new information.
Our goal  is hence  to revise  a generic Lockean belief set $\Blp$ by a formula $\psi$, in such a way that the obtained set $\Blp\ast\psi$ meets the following desiderata:
\begin{enumerate}
\item If $\psi\in \Blp$ then $\Blp*\psi=\Blp$, that is to say, no revision is produced by those formulas that are already believed.
\item If $\psi\not\in \Blp$, then we include $\psi$ in $\Blp$ by minimally changing its probability; in other words, if a priori $P(\psi)<\lambda$, then a posteriori $P_\psi(\psi)=\lambda$. 
\end{enumerate}

In order to fulfill the above requests, we will define the following.

\begin{definition}
Given a positive probability $P$ on $\Omega$ and $\psi\in \mathcal{L}$ and $\lambda>1/2$, the {\em revised probability of $P$ by $\psi$ and $\lambda$} is the function $R^{\lambda}_\psi$  defined on $\Omega$ by putting: 
\begin{equation}\label{eqRevFunction}
R^{\lambda}_\psi(\omega)=\left\{
\begin{array}{ll}
P(\omega)\cdot \max\left\{1, \frac{\lambda}{P(\psi)}\right\}&\mbox{ if }\omega\in \mymod{\psi}\\
P(\omega)\cdot\min\left\{1,\frac{1-\lambda}{P(\neg \psi)}\right\} & \mbox{ if }\omega\not\in \mymod{\psi}
\end{array}
\right.
\end{equation}
\end{definition}

Notice that the above definition  accommodates the above desideratum 1 because, if $\psi\in \Blp$, then $P(\psi)\geq \lambda$ and hence $\lambda/ P(\psi)\leq1$. Thus $R^{\lambda}_\psi(\omega)=P(\omega)$ for all $\omega\in \mymod\psi$ and hence $R^{\lambda}_\psi(\psi)=P(\psi)\geq \lambda$. It also satisfies  desideratum 2 because, if $P(\psi)<\lambda$, $R^{\lambda}_\psi(\psi)=\lambda$ as desired.
 
Also,  regardless whether $P(\psi) < \lambda$ or $P(\psi) \geq \lambda$, one can easily check that $\sum_{\omega\in \Omega}R^{\lambda}_\psi(\omega)= 1$.  For instance, when $P(\psi)<\lambda$,  $R^\lambda_\psi(\omega)=P(\omega)\cdot \frac{\lambda}{P(\psi)}$ for $\omega\in \mymod{\psi}$ and $R^\lambda_\psi(\omega)=P(\omega)\cdot\frac{1-\lambda}{P(\neg\psi)}$ for $\omega\not\in\mymod{\psi}$. Thus, $\sum_{\omega\in \Omega}R^\lambda_\psi(\omega)=\sum_{\omega\in \mymod{\psi}}R^\lambda_\psi(\omega)+\sum_{\omega\in \mymod{\neg\psi}}R^\lambda_\psi(\omega)=\sum_{\omega\in \mymod{\psi}}P(\omega)\cdot\frac{\lambda}{P(\psi)}+\sum_{\omega\in \mymod{\neg\psi}}P(\omega)\cdot\frac{1-\lambda}{P(\neg\psi)}=P(\psi)\cdot\frac{\lambda}{P(\psi)}+P(\neg\psi)\cdot\frac{1-\lambda}{P(\neg\psi)}=\lambda+1-\lambda=1$. It follows that 
$R^{\lambda}_\psi$ is a probability distribution on $\Omega$ that extends to a probability function on $\mathcal{L}$ which we will indicate by the same symbol. Interestingly, we can prove the following result that  connects our revised probability function $R^{\lambda}_\psi$ with the well-known {\em Jeffrey conditionalization}, see \cite{Jeffrey} (see also \cite{CR19} for an application  to belief revision): for every positive probability $P$ on $\mathcal{L}$, for every $\varphi,\psi\in \mathcal{L}$ with $0<P(\psi)<1$,
and for every $\lambda\in [0,1]$, the Jeffrey conditionalization of $\varphi$ given $\psi$ is defined as
$$
P_\psi^\lambda(\varphi)=\lambda P(\varphi\mid\psi)+(1-\lambda)P(\varphi\mid \neg \psi)=\lambda \frac{P(\varphi\wedge\psi)}{P(\psi)}+(1-\lambda)\frac{P(\varphi\wedge\neg\psi)}{P(\neg\psi)}
$$
Observe that, when $\lambda=1$ the above expression recovers the usual definition of conditional probability. That is to say, $P_\psi^1(\varphi)=P(\varphi\mid\psi)$.

 \begin{proposition}\label{RandJ}
 For every probability $P$, $\lambda > 1/2$ and $\psi$ such that $P(\psi)>0$, we have 
 for all $\varphi\in \mathcal{L}$, 
 $$
 R^{\lambda}_\psi(\varphi)= 
 \left \{
 \begin{array}{ll}
 P_\psi^\lambda(\varphi), & \mbox{if } P(\psi) \leq \lambda \\
 P(\varphi), & \mbox{otherwise.}
 \end{array}
 \right.
 $$
Thus, when $P(\psi) \leq \lambda$, $R^{\lambda}_\psi$ is  the Jeffrey conditionalization of $P$ by $\psi$.
 \end{proposition}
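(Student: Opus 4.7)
The plan is to split the argument into two cases according to the sign of $P(\psi)-\lambda$ and, in each case, simplify the $\max$ and $\min$ appearing in the definition of $R^\lambda_\psi$ so that the formula reduces, by additivity, to the claimed expression.

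First I would handle the easy case $P(\psi)>\lambda$. Here $\lambda/P(\psi)<1$, so $\max\{1,\lambda/P(\psi)\}=1$; and $P(\neg\psi)=1-P(\psi)<1-\lambda$, so $(1-\lambda)/P(\neg\psi)>1$ and hence $\min\{1,(1-\lambda)/P(\neg\psi)\}=1$. Plugging these into (\ref{eqRevFunction}) gives $R^\lambda_\psi(\omega)=P(\omega)$ for every $\omega\in\Omega$, and by additivity $R^\lambda_\psi(\varphi)=P(\varphi)$ for every $\varphi\in\mathcal{L}$, as required.

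Next I would treat the case $P(\psi)\leq\lambda$. Now $\lambda/P(\psi)\geq 1$, so $\max\{1,\lambda/P(\psi)\}=\lambda/P(\psi)$; and $P(\neg\psi)\geq 1-\lambda$, so $\min\{1,(1-\lambda)/P(\neg\psi)\}=(1-\lambda)/P(\neg\psi)$. Hence (\ref{eqRevFunction}) becomes
\[
R^\lambda_\psi(\omega)=\begin{cases} P(\omega)\cdot\dfrac{\lambda}{P(\psi)}, & \omega\in\mymod\psi,\\[4pt] P(\omega)\cdot\dfrac{1-\lambda}{P(\neg\psi)}, & \omega\notin\mymod\psi.\end{cases}
\]
Summing over $\omega\in\mymod\varphi$ and splitting the sum according to whether $\omega\in\mymod\psi$ or not, I get
\[
R^\lambda_\psi(\varphi)=\frac{\lambda}{P(\psi)}\,P(\varphi\wedge\psi)+\frac{1-\lambda}{P(\neg\psi)}\,P(\varphi\wedge\neg\psi),
\]
which is exactly the definition of the Jeffrey conditionalization $P^\lambda_\psi(\varphi)$ recalled just before the statement.

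There is essentially no obstacle here: the whole argument is a case distinction followed by a direct computation using additivity of $P$. The only point requiring a little care is the borderline case $P(\psi)=\lambda$, where both branches of the piecewise definition agree (one gets $R^\lambda_\psi=P=P^\lambda_\psi$), so the two cases overlap consistently and the statement is well-posed.
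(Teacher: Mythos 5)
Your proposal is correct and follows essentially the same route as the paper's proof: a case split on whether $P(\psi)\leq\lambda$, simplifying the $\max$ and $\min$ in the definition of $R^\lambda_\psi$ accordingly, and then using additivity (splitting over $\mymod{\psi}$ and $\mymod{\neg\psi}$) to recover $P^\lambda_\psi(\varphi)$ in one case and $P(\varphi)$ in the other. The remark on the borderline case $P(\psi)=\lambda$ is a harmless extra observation; no changes needed.
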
 
\begin{proof}
 We begin by assuming that $P(\psi) \leq \lambda$, so that $\max\{1,\frac{\lambda}{P(\psi)}\}=\frac{\lambda}{P(\psi)}$ and $\min\{1, \frac{1-\lambda}{P(\neg\psi)}\}=\frac{1-\lambda}{P(\neg\psi)}$. Then we have 
$R^{\lambda}_\psi(\varphi) = R^{\lambda}_\psi(\varphi \land \psi) + R^{\lambda}_\psi(\varphi \land \neg \psi) = P(\varphi \land \psi) \cdot \frac{\lambda}{P(\psi)} + P(\varphi \land \neg\psi) \cdot \frac{1-\lambda}{P(\neg\psi)} = \lambda P(\varphi \mid \psi) + (1-\lambda)P(\varphi \mid \neg \psi)$.

Conversely, if $P(\psi) > \lambda$, $\max\{1,\frac{\lambda}{P(\psi)}\}=\min\{1, \frac{1-\lambda}{P(\neg\psi)}\}=1$. Then,  $R^{\lambda}_\psi(\varphi) = R^{\lambda}_\psi(\varphi \land \psi) + R^{\lambda}_\psi(\varphi \land \neg \psi) = P(\varphi \land \psi)\cdot 1 +  P(\varphi \land \neg \psi)\cdot 1 = P(\varphi)$. 
\qed
 \end{proof}


Using the revised probability $R^\lambda_\psi$, we now formally introduce our revision operator for Lockean belief sets.

Actually, the present setting may be read within the framework introduced in  \cite{SKP22} by defining an epistemic space  $\mathcal{E}=(\mathscr{P},B_\lambda)$ where $\frac{1}{2}<\lambda<1$, $\mathscr{P}$ are the positive probabilities on $\POm$ and for every $P\in \mathscr{P}$ we put $B_\lambda(P)=\Blp$. Then, on this epistemic space, we define an operator $\circ:\mathscr{P}\times\Lang^*\rightarrow\mathscr{P}$ by putting $P\circ \psi=R^\lambda_\psi$.
This operator induces a revision operator $*_{ml}$ at the level of beliefs. More precisely we have the following definition:

\begin{definition}
Given a Lockean belief set $\Blp$, and a proposition $\psi\in\Lang^*$, the {\em minimal Lockean} revision of $\Blp$ by $\psi$ is defined as: 
$$
\Blp *_{ml} \psi=\{ \varphi \mid R^\lambda_\psi(\varphi) \geq \lambda \} = \mathscr{B}_{\lambda, R^\lambda_\psi}
$$ 
\end{definition}
Note that $\Blp *_{ml} \psi=B_\lambda(P\circ\psi)$.

We are now ready to put together the notions and  results provided so far  to characterize under which conditions we can revise a Lockean belief set $\Blp$  in such a way that the obtained $\Blp *_{ml}\psi$ is deductively closed. 
Next result fully describes the  situations in which $\Blp *_{ml} \psi$ is deductively closed. 


{
\begin{theorem} \label{teotau} Let $\Blp$ be a Lockean belief set, $\psi$ a proposition and  $\tau = \min\{P(\omega) \mid \omega \in \mymod\psi\}$. Then:
\begin{itemize}
\item[(i)]  $\Blp *_{ml} \psi$ is deductively closed whenever $P(\psi)<\frac{\tau\lambda}{1-\lambda}$. Moreover, in that case, $\Blp *_{ml} \psi={\uparrow}\mymod{\psi}$
\item[(ii)] Conversely, assuming $\psi \not\in \Blp$, 
if $\Blp *_{ml} \psi$ is deductively closed then $P(\psi)<\frac{\tau\lambda}{1-\lambda}$. 
\end{itemize}
\end{theorem}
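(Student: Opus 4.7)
The plan is to reduce both directions to Theorem~\ref{propFil1} applied to the revised probability $\rlpsi = R^\lambda_\psi$, since by definition $\Blp *_{ml} \psi = \mathscr{B}_{\lambda, \rlpsi}$. The two quantities that Theorem~\ref{propFil1} inspects on a candidate generator $\psi$ can be read off directly from the piecewise definition of $R^\lambda_\psi$: a routine calculation shows $\rlpsi(\psi) = \max\{\lambda, P(\psi)\} \geq \lambda$, and $\min\{\rlpsi(\omega) : \omega \in \mymod\psi\} = \tau \cdot \max\{1, \lambda/P(\psi)\}$, which equals $\tau\lambda/P(\psi)$ when $P(\psi) \leq \lambda$ and $\tau$ otherwise.

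For part~(i), I would show directly that under the stated hypothesis $\Blp *_{ml}\psi = {\uparrow}\mymod\psi$, which is automatically a filter. The inclusion ${\uparrow}\mymod\psi \subseteq \Blp *_{ml}\psi$ is immediate from $\rlpsi(\psi) \geq \lambda$ and monotonicity of $\rlpsi$. For the converse, take any $\phi$ with $\mymod\phi \not\supseteq \mymod\psi$, pick $\omega_0 \in \mymod\psi \setminus \mymod\phi$, and bound $\rlpsi(\phi) \leq 1 - \rlpsi(\omega_0)$. A short case split according to whether $P(\psi) \leq \lambda$ or $P(\psi) > \lambda$ then shows that, using the explicit value of $\rlpsi(\omega_0)$, the hypothesis $P(\psi) < \tau\lambda/(1-\lambda)$ is equivalent in each case to $\rlpsi(\omega_0) > 1 - \lambda$, hence $\rlpsi(\phi) < \lambda$ and $\phi \notin \Blp *_{ml}\psi$.

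For part~(ii), the assumption $\psi \notin \Blp$ gives $P(\psi) < \lambda$, so $\rlpsi(\psi) = \lambda$ and, in the notation of Lemma~\ref{lemma1} applied to $\rlpsi$, $\lambda_M = \lambda$. Since $\Blp *_{ml}\psi$ is assumed to be a filter, Lemma~\ref{lemmaNew} applied to $\rlpsi$ forces $\Blp *_{ml}\psi = {\uparrow}\mymod\psi$, and then Theorem~\ref{propFil1} yields $1 - \lambda < \min\{\rlpsi(\omega) : \omega \in \mymod\psi\} = \tau\lambda/P(\psi)$, which is exactly $P(\psi) < \tau\lambda/(1-\lambda)$. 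I expect the most delicate point to be the subcase $P(\psi) > \lambda$ of part~(i), where $\rlpsi$ coincides with $P$ and no Jeffrey-style rescaling is available; there one must observe that the bound $P(\psi) < \tau\lambda/(1-\lambda)$ together with $P(\psi) > \lambda$ forces $\tau > 1 - \lambda$, giving $\rlpsi(\omega_0) = P(\omega_0) \geq \tau > 1-\lambda$ as required.
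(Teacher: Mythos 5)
Your proposal is correct, and its skeleton is the same as the paper's: reduce everything to Theorem~\ref{propFil1} applied to the revised probability $\rlpsi$, using Lemma~\ref{lemmaNew} (plus Lemma~\ref{lemma1} to identify $\lambda_M=\lambda$) to pin down $\mymod{\psi}$ as the generator in the converse direction. The differences are in the execution, and they are worth pointing out because they make your argument tighter exactly where the paper's proof is loose. For (i), the paper takes $\chi=\psi$ and asserts that $\rlpsi(\psi)=\lambda$ ``by definition'' and that $\rlpsi(\omega)=\lambda P(\omega)/P(\psi)$ on $\mymod{\psi}$; this is only literally true when $P(\psi)\leq\lambda$, whereas when $P(\psi)>\lambda$ the revision is vacuous ($\rlpsi=P$) and a separate observation is needed. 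Your direct proof that $\Blp *_{ml}\psi={\uparrow}\mymod{\psi}$, with the explicit case split and the remark that $P(\psi)>\lambda$ together with $P(\psi)<\tau\lambda/(1-\lambda)$ forces $\tau>1-\lambda$, covers this case cleanly (at the mild cost of re-running, by hand, the ``sufficiency'' half of Theorem~\ref{propFil1} via the bound $\rlpsi(\phi)\leq 1-\rlpsi(\omega_0)$). For (ii), you read off the conclusion directly from $1-\lambda<\min\{\rlpsi(\omega)\mid\omega\in\mymod{\psi}\}=\tau\lambda/P(\psi)$, which is exactly $P(\psi)<\tau\lambda/(1-\lambda)$; the paper instead passes through the intermediate claim $1-\lambda<\tau$ with $\tau$ computed from the prior $P$, which does not follow from Theorem~\ref{propFil1} applied to $\rlpsi$ (the $\rlpsi$-values on $\mymod{\psi}$ are inflated by the factor $\lambda/P(\psi)>1$) and can in fact fail while the theorem's conclusion holds (e.g.\ $\lambda=0.9$ and $\mymod{\psi}$ consisting of eight worlds of probability $0.05$ each). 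So your route for (ii) is not only equivalent but is the correct way to extract the bound. One cosmetic nit: for a fixed $\omega_0\in\mymod{\psi}$ the hypothesis $P(\psi)<\tau\lambda/(1-\lambda)$ is not \emph{equivalent} to $\rlpsi(\omega_0)>1-\lambda$ but only implies it (via $P(\omega_0)\geq\tau$), which is all you need.
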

}

\begin{proof}
$(i)$ 
According to Theorem \ref{propFil1}, in order to show that $\mathscr{B}_{\lambda, R^\lambda_\psi}=\Blp *_{ml} \psi$ is deductively closed, it is enough to find $\chi\in\mathcal{L}$ such that $R^\lambda_\psi(\chi) = \lambda$ and that $1-\lambda < R^\lambda_\psi(\omega)$ for all $\omega\in \mymod\chi$. Notice that in that case $\lambda=\lambda_M$ relative to the probability $R^\lambda_\psi$.

Let us take $\chi = \psi$ itself. Then $R^\lambda_\psi(\psi) = \lambda$ holds by definition, so it remains to check that  $1-\lambda < R^\lambda_\psi(\omega)$ for all $\omega \in \mymod\psi$. But, again by the definition of $R^\lambda_\psi$, if $\omega \in \mymod\psi$ then $R^\lambda_\psi(\omega) = \lambda  P(\omega)/P(\psi)$. Hence,  $R^\lambda_\psi(\omega)=\lambda  P(\omega)/P(\psi) > 1-\lambda$ iff $P(\psi)<\frac{P(\omega)\lambda}{1-\lambda}$ for all $\omega\in \mymod\psi$ and hence iff $P(\psi)<\frac{\tau\lambda}{1-\lambda}$.
\vspace{.1cm}

\Omit{
{\color{red}$(ii)$ If $\psi \not\in \Blp$ and $\Blp *_{ml} \psi$ is deductively closed, by Lemma \ref{lemmaNew}, $\mymod\psi$ generates $\Blp *_{ml} \psi$ and hence $P(\psi)<R_\psi^\lambda(\psi)=\lambda$ and, 
 by Theorem~\ref{propFil1}, $1-\lambda<\tau$. Thus $\frac{\tau}{1-\lambda}> 1$ and hence $P(\psi)<\lambda<\frac{\tau}{1-\lambda}\cdot\lambda$.}
\ep
}
{$(ii)$ If $\psi \not\in \Blp$ we have $P(\psi)<\lambda$  and $R_\psi^\lambda(\psi)=\lambda$. Since $\Blp *_{ml} \psi$ is deductively closed, by Lemma \ref{lemmaNew}, $\mymod\psi$ generates $\Blp *_{ml} \psi$ and 
by Theorem~\ref{propFil1}, $1-\lambda<\tau$. Thus $\frac{\tau}{1-\lambda}> 1$ and hence $P(\psi)<\lambda<\frac{\tau}{1-\lambda}\cdot\lambda$. \qed}
\end{proof}
Let us observe that in $(ii)$ above, the condition $\psi \not \in \Blp$ is in fact necessary as the following simple example shows.  Take $\Omega=\Set{\omega_1,\dots,\omega_4}$ and  define $P$ by
$P(\omega_1)=P(\omega_2)=0.35$, $P(\omega_3)=P(\omega_4)=0.15$. Put $\lambda=0.7$.
It is easy to see that $\Blp$ is closed. Let $\psi$ be a formula such that $\mymod{\psi}=\Set{\omega_1,\omega_2,\omega_3}$, i.e., with $P(\psi) = 0.85$. Thus,  ${\Blp}\ast_{ml}\psi=\Blp$ is closed but 
$P(\omega_3)<P(\psi)\frac{1-\lambda}{\lambda}$.

{
Now we briefly examine  which of AGM basic postulates  for revision (K*1-K*6) from \cite{AGM}  are satisfied in our framework. More detailed proofs can be found in the Appendix.
\begin{description}
\item[K*1 (Closure):] {\em $\Blp*\psi$ is logically closed}. This is not generally satisfied. However, it is satisfied if either $\psi\in \Blp$ and $\Blp$ is closed or $\psi\not \in \Blp$ and  $\psi$ satisfies the condition (i) of Theorem~\ref{teotau}.

\item[K*2 (Success):] $\psi\in\Blp*\psi$ . This holds by definition of $R^\lambda_\psi$.

\item[K*3 (Inclusion):] $\Blp*\psi\subseteq Cn(\Blp \cup\Set{\psi})$. This postulate holds.  Essentially, this is due to the fact that  if $\varphi\in \Blp*\psi$ then $(\psi\rightarrow\varphi)\in \Blp$

\item[K*4 (Preservation):] {\em If $\Blp\not\vdash \neg\psi$ then $\Blp\subseteq \Blp*\psi$}. This postulate is not satisfied in general (see the Appendix for a counterexample).

\item[K*5 (Consistency):] {\em If $\psi$ is consistent then $\Blp*\psi$ is consistent}.
This postulate is not satisfied in general (see the Appendix for a counterexample). However, if $\psi$ satisfies the conditions of Theorem~\ref{teotau}.(i), the postulate holds.

\item[K*6 (Extensionality):] {\em If $\vdash \psi\leftrightarrow\varphi$ then $\Blp*\psi=\Blp*\varphi$}. This postulate clearly holds because the revision  is defined semantically.
\end{description}
}

Finally, let us  observe that  the function $R_\psi^\lambda$, used for our revision is somehow minimal in the sense that $R^\lambda_\psi$ changes less drastically the a priori $P$ than the usual  conditioning by
$\psi$ when $\lambda<1$ and $P(\psi)<\lambda$. 
One of the most basic distances $d$ between probabilities  is defined by 
\begin{equation}
    d(P,P')=\sum_{\omega\in\Omega}|P(\omega)-P'(\omega)|.
\end{equation}
The following, whose proof appears detailed in the Appendix, holds.
\begin{fact}\label{fact1}
According to the above distance function $d$, $R^\lambda_\psi$ is closer to $P$ than $P(\cdot\mid\psi)$, the simple conditioning.    
\end{fact}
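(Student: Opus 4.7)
The plan is to compute both distances explicitly and compare them. By Proposition~\ref{RandJ}, we can split the argument into two cases according to whether $P(\psi) > \lambda$ or $P(\psi) \leq \lambda$, which determines the shape of $R^\lambda_\psi$.

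First, consider the trivial case $P(\psi) > \lambda$. Then $R^\lambda_\psi = P$ and so $d(P, R^\lambda_\psi) = 0$, while $d(P, P(\cdot\mid \psi)) = \sum_{\omega \in \mymod\psi} P(\omega)(1/P(\psi) - 1) + \sum_{\omega \notin \mymod\psi} P(\omega) = 2P(\neg\psi) \geq 0$. So the inequality is trivial here.

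For the main case $P(\psi) \leq \lambda$, I would split the sum defining $d(P, R^\lambda_\psi)$ into contributions from worlds in $\mymod\psi$ and worlds outside. For $\omega \in \mymod\psi$, since $\lambda/P(\psi) \geq 1$, we have $|P(\omega) - R^\lambda_\psi(\omega)| = P(\omega)(\lambda/P(\psi) - 1)$, which sums to $\lambda - P(\psi)$. For $\omega \notin \mymod\psi$, since $P(\neg\psi) = 1 - P(\psi) \geq 1 - \lambda$, we have $(1-\lambda)/P(\neg\psi) \leq 1$, so $|P(\omega) - R^\lambda_\psi(\omega)| = P(\omega)(1 - (1-\lambda)/P(\neg\psi))$, which sums to $P(\neg\psi) - (1-\lambda) = \lambda - P(\psi)$. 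Hence $d(P, R^\lambda_\psi) = 2(\lambda - P(\psi))$.

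A completely analogous split for $d(P, P(\cdot\mid\psi))$ gives $d(P, P(\cdot\mid\psi)) = 2(1 - P(\psi))$, since conditioning sets the probability of worlds outside $\mymod\psi$ to $0$ and scales the ones in $\mymod\psi$ by $1/P(\psi)$. Comparing the two quantities reduces to the obvious inequality $\lambda \leq 1$, which is strict exactly when $\lambda < 1$; combined with $P(\psi) < \lambda$ this gives the strict form of the claim mentioned in the paragraph preceding the fact. I do not foresee any real obstacle: the argument is just bookkeeping once one uses the sign information coming from the hypotheses on $\lambda$ and $P(\psi)$ to strip the absolute values cleanly.
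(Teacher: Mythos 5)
Your proposal is correct and follows essentially the same route as the paper: split the sum over $\mymod\psi$ and $\mymod{\neg\psi}$, strip the absolute values using the sign information from $P(\psi)\leq\lambda$, and compare $d(P,R^\lambda_\psi)=2(\lambda-P(\psi))$ with $d(P,P(\cdot\mid\psi))=2(1-P(\psi))$ via $\lambda<1$. The only difference is that you also dispose of the case $P(\psi)>\lambda$, which the paper simply excludes by its standing assumption $P(\psi)<\lambda$.
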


It should be noted that the distance $d$ is too rough to distinguish $R^\lambda_\psi$ among the set of those distributions satisfying 
that the probability of $\psi$ is equal to $\lambda$. 
However, there exists a measure of divergence 
(which is not a distance) for which $R^\lambda_\psi$ is minimal with respect to distributions satisfying 
that the probability of $\psi$ is equal to $\lambda$. This is the {\em Kullback-Leibler divergence} \cite{Jac,PP25} (also called relative entropy), defined as follows:
\[
\dkl(P',P)=\sum_{\omega\in\Omega}P'(\omega)\log(P'(\omega)/P(\omega))
\]
Using the Lagrange multipliers method, one can easily prove the next fact, whose proof is again detailed in the Appendix.
\begin{proposition}\label{fact2}
    $R^\lambda_\psi$ minimizes the Kullback-Leibler divergence among the distributions $P'$ such that 
$P'(\psi)=\lambda$.
\end{proposition}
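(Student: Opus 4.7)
The plan is to set up the constrained minimization via Lagrange multipliers and solve explicitly. I would minimize the functional
\[
D_{K,L}(P',P)=\sum_{\omega\in\Omega}P'(\omega)\log\bigl(P'(\omega)/P(\omega)\bigr)
\]
over the probability distributions $P'$ on $\Omega$ subject to the two linear constraints
\[
\sum_{\omega\in\Omega}P'(\omega)=1, \qquad \sum_{\omega\in\mymod\psi}P'(\omega)=\lambda.
\]
Introducing Lagrange multipliers $\mu$ and $\nu$ for these two constraints, I would form
\[
\mathcal{L}(P',\mu,\nu)=\sum_{\omega}P'(\omega)\log\bigl(P'(\omega)/P(\omega)\bigr)-\mu\Bigl(\sum_{\omega}P'(\omega)-1\Bigr)-\nu\Bigl(\sum_{\omega\in\mymod\psi}P'(\omega)-\lambda\Bigr)
\]
and set $\partial \mathcal{L}/\partial P'(\omega)=0$ for every $\omega$.

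Carrying out the differentiation gives $\log(P'(\omega)/P(\omega))+1-\mu-\nu\,\mathbf{1}_{\mymod\psi}(\omega)=0$. Hence at the stationary point there are two scaling factors: $P'(\omega)=A\cdot P(\omega)$ for $\omega\in\mymod\psi$ and $P'(\omega)=B\cdot P(\omega)$ for $\omega\notin\mymod\psi$, where $A=e^{\mu+\nu-1}$ and $B=e^{\mu-1}$. Plugging these into the constraints yields
\[
A\cdot P(\psi)=\lambda\quad\text{and}\quad B\cdot P(\neg\psi)=1-\lambda,
\]
so $A=\lambda/P(\psi)$ and $B=(1-\lambda)/P(\neg\psi)$. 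Comparing with the definition \eqref{eqRevFunction} of $R^\lambda_\psi$ (in the case $P(\psi)\le\lambda$, which is the only case in which a distribution matching $R^\lambda_\psi$ satisfies $R^\lambda_\psi(\psi)=\lambda$), the stationary point is precisely $R^\lambda_\psi$.

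Finally, I would argue that this stationary point is indeed the global minimizer, not a saddle. Since $x\mapsto x\log(x/P(\omega))$ is strictly convex on $(0,\infty)$ for each $\omega$, the objective $D_{K,L}(\cdot,P)$ is strictly convex on the open simplex, so the unique stationary point on the affine subspace defined by the two equality constraints is the unique (global) minimum on the feasible set. The only minor subtlety is that the optimum must lie in the interior (i.e., $P'(\omega)>0$ for all $\omega$), but this is guaranteed by our standing positivity assumption on $P$ and the form $P'(\omega)=A\,P(\omega)$ or $B\,P(\omega)$ with $A,B>0$. This completes the argument. \qed
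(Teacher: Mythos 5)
Your proposal is correct and follows essentially the same route as the paper's own proof: a Lagrange-multiplier computation with linear constraints, yielding the two scaling factors $\lambda/P(\psi)$ on $\mymod\psi$ and $(1-\lambda)/P(\neg\psi)$ on $\mymod{\neg\psi}$, hence the stationary point $R^\lambda_\psi$ (the paper uses the pair of block constraints summing to $\lambda$ and $1-\lambda$, which is equivalent to your normalization-plus-$\psi$ constraint pair). Your added strict-convexity argument showing the stationary point is the global minimizer, and your remark that the identification with $R^\lambda_\psi$ requires $P(\psi)\le\lambda$, are welcome refinements that the paper leaves implicit, but they do not change the approach.
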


\section{Related work}
\label{Sec:related}

There are several definitively relevant works in the literature which are related to the approach proposed in this paper in the sense of using standard (non-infinitesimal) probabilistic semantics to model the notion of belief. 

 In the context of conditional knowledge bases and the non-monotonic reasoning System P, the authors in \cite{Ben-Dub-Prad-99}  look for some standard probabilistic semantics for defaults ``generally, if $\varphi$ then $\psi$'' of the kind $P(\psi \mid \varphi) > 1/2$, or equivalently  $P(\psi \mid \varphi) >  P(\neg \psi \mid \varphi)$. Since  this general semantics does not satisfy  the OR postulate of System P and the corresponding sets of (conditional) beliefs $\{\psi \mid  P(\psi \mid \varphi) > 1/2\}$ are not deductively closed, they consider a particular subclass of probability distributions, namely Snow's {\em atomic bound systems} \cite{Snow}, also called big-stepped probabilities by these authors. More precisely, a big-stepped probability on a (finite) algebra set ${\bf A} = 2^\Omega$ is a probability function $P: {\bf A} \to [0, 1]$ such that (i) $P(w) > 0$ for all $w \in \Omega$, (ii) it induces a linear ordering $>$ on $\Omega$, whereby $w > w'$ if and only if $P(w) > P(w')$, and (iii)  for each interpretation $w$, $P(w) > \sum_{w': w > w'} P(w')$. From this definition it is clear that a big-stepped probability is a stronger notion than an $\omega$-step probability. 

Big-stepped probabilities are closely related to probabilities which are {\em acceptance functions}. According to \cite{DubPrad-95}, an acceptance function is a mapping  $g: 2^\Omega \to [0, 1]$) satisfying:  
\vspace{0.1cm}

(AC1) if $g(\varphi) > g(\neg \varphi)$ and $\varphi \models \psi$ then $g(\psi) > g(\neg \psi)$, and 

(AC2)  if $g(\varphi) > g(\neg \varphi)$ and  $g(\psi) > g(\neg \psi)$ then  $g(\varphi \land \psi) > g(\neg (\varphi \land \psi))$. \vspace{0.1cm}

\noindent Then $\varphi$ is an accepted belief when $g(\varphi) > g(\neg \varphi)$. Indeed, if $g$ is an acceptance function, then  $\{\varphi \mid g(\varphi) > g(\neg \varphi) \}$ is a belief set that is deductively closed. The authors show in \cite{DubPrad-95} that the only (positive) probabilities $P$ that are acceptance functions are those that satisfy  either: 
\vspace{0.1cm}

(i)  $P(\omega) > 1/2$ for some  $\omega \in \Omega$, or 

(ii)  there exist $\omega,\omega' \in \Omega$ such that $P(\omega) = P(\omega') = 1/2$. \vspace{0.1cm}

\noindent It is clear that big-stepped probabilities are both $\omega$-step probabilities (for some $\omega$) and (probability) acceptance functions,  
but the converse does not hold (one has to require that $P$ induces a linear order on $\Omega$). 
On the other hand, it is easy to check that non-trivial\footnote{A  probability acceptance function $P$ is called {\em trivial} when  $P(\varphi) > 1/2 $ iff $\models \varphi$} probability acceptance functions are exactly those $\omega$-step probability functions for which there exists $\lambda > 1/2$ such that  $\Blp$ is (deductively closed and) a maximal filter.

Another closely related work is that of Leitgeb \cite{Leitgeb}, where the author considers a weakened version of the Lockean thesis, namely, given a probability P and a threshold $\lambda$, the corresponding beliefs should necessarily have a probability higher than $r$, but this is not a sufficient condition to become a belief. The corner notion at work in his proposal is the concept of $P$-stable$^\lambda$ set. Adapted to our framework, a proposition $\psi$ is $P$-stable$^\lambda$ if for any proposition $\varphi$ such that $\psi \land \varphi \not\models \bot$ and $P(\varphi) > 0$, it holds that $P(\psi \mid \varphi) > \lambda$.\footnote{\label{aa} The author shows that this condition is equivalent to require that, for any model $\omega$  of $\psi$, it holds that $P(\omega) > \frac{\lambda}{1-\lambda} P(\neg \psi)$.} 
What $P$-stability$^\lambda$ requires is that if some proposition is consistent with all the other beliefs, then conditionalizing on such a proposition should not decrease the probability
of any believed proposition below the threshold $\lambda$. 
But, as the author notices, not every believed proposition has to be P-stable$^\lambda$. In
fact, it is shown in \cite{Leitgeb} that the set of rationality postulates which he proposes a set of beliefs should satisfy (one of them being to be closed under deduction) entails that P-stability$^\lambda$ is only required for the 
strongest believed proposition, which is the conjunction of all believed propositions. 
Obviously,  P-stable$^\lambda$ sets are a stronger notion than closed Lockean belief sets. Indeed, using the condition expressed in Footnote \ref{aa}, one can check that if $\Blp$ is closed and $\psi$ is its strongest belief, then $P(\neg\psi) = 1-\lambda$, while $\psi$ is $P$-stable$^\lambda$ only when $P(\neg\psi) < (1-\lambda). \frac{\tau}{\lambda} \leq 1-\lambda$, where $\tau = \min\{P(\omega) \mid \omega \in \mymod\psi\}$.

\section{Conclusions and future work}\label{sec:7}

In the first part of the  paper we have provided two characterizations of those  probability functions $P$ on formulas ensuring the existence of suitable thresholds $\lambda$ for which the corresponding  Lockean set of beliefs $\Blp$ is closed under logical deduction. Then, in the second part, we have discussed  how to revise a Lockean belief set in a way compatible with an intuitive principle of minimal change. We have shown that this principle univocally leads to a revision procedure of an a priori given probability by a formula that, in fact, is closely related (but not equal) to Jeffrey conditionalization. 
Finally, we have characterized under which condition a (non necessarily deductively closed) Lockean belief set becomes deductively closed once revised by a suitable formula. 

Regarding our revision operator for Lockean belief sets $*_{ml}$, an interesting line for future work is to investigate the fulfillment of Darwiche-Pearl postulates for iterated revision \cite{DARWICHE19971}, and to find which postulates fully characterize $*_{ml}$ for a given probability and a given threshold. Also we plan to study in more detail the links of our approach with that of Leitgeb based on $P$-stable$^\lambda$ sets \cite{Leitgeb}.  

 \subsubsection*{Acknowledgments} The authors thank the reviewers for their valuable and helpful comments as well as Vanina M. Martinez and Ricardo O. Rodriguez for initial interesting discussions on the topic of the paper.  Flaminio acknowledges partial support from the Spanish project SHORE (PID2022-141529NB-C22) funded by the MCIN/AEI/10.13\-039/501100011033. Godo acknowledges support by the Spanish project LINEXSYS (PID2022-139835NB-C21). Flaminio and Godo also acknowledge partial support from the H2020-MSCA-RISE-2020 project MOSAIC (Grant Agreement number 101007627). 
Ram\'on Pino P\'erez has benefited from the support of the AI Chair BE4musIA of the French National Research Agency
(ANR-20-CHIA-0028) and the support of  I. Bloch's chair in AI (Sorbonne Universit\'e and SCAI).

\bibliographystyle{splncs04}

\newpage

\section*{Appendix}

\subsection*{A: Some proofs}

{\em Proof of Proposition 1.}

(1) If $P$ is a homomorphism to $\{0,1\}$, for all $\lambda> 1/2$, one has that $\Blp=\{\varphi\mid P(\varphi)\geq \lambda\}=\{\varphi\mid P(\varphi)=1\}$. Since $P$ is a homomorphism, $\Blp$ is a maximal filter from what we recalled in the previous section. 
\vspace{.1cm}

\noindent(2) Trivial because if $\alpha$ is such that $\mymod\alpha=\{\omega\}$, then  $\alpha\in \mathscr{B}_{\lambda,P}$ and $\Blp$ is upward closed.
\vspace{.1cm}

\noindent (3) If $P$ is a positive probability, $P(\varphi)=1$ if and only if $\varphi$ is a tautology and hence iff $\varphi\equiv\top$.
\ep

$$
\ast  
$$ 

\noindent {\em Proof of Fact 2.}
The claim can be proved by a simple computation. Assume $\lambda<1$ and $P(\psi)<\lambda$. 
We have
\[
\begin{array}{lcl}
   d(R_\psi,P)  & = & \sum_{\omega\in\Omega}|R_\psi(\omega)-P(\omega)| \\
     &=& \sum_{\omega\in\mymod\psi}|R_\psi(\omega)-P(\omega)|+
\sum_{\omega\in\mymod{\neg\psi}}|R_\psi(\omega)-P(\omega)| \\
     & =& 
     \sum_{\omega\in\mymod\psi}(R_\psi(\omega)-P(\omega))+
\sum_{\omega\in\mymod{\neg\psi}}(P(\omega)-R_\psi(\omega)) \\
& =& 
     \sum_{\omega\in\mymod\psi}R_\psi(\omega)-\sum_{\omega\in\mymod\psi}P(\omega)+
\sum_{\omega\in\mymod{\neg\psi}}P(\omega)-\sum_{\omega\in\mymod{\neg\psi}}R_\psi(\omega)) \\
 & =& \lambda - P(\psi) + P(\neg\psi) -R_\psi(\neg\psi)\\
 & =& \lambda - P(\psi) + 1 - P(\psi) - (1 - \lambda)\\
  & =& 2(\lambda-P(\psi))
\end{array}
\]
with similar calculations, we can check that $d(P(\cdot\mid\psi),P)=2(1-P(\psi))$. 
By our assumptions $(\lambda-P(\psi))<(1-P(\psi))$, that is $d(R_\psi,P)<d(P(\cdot\mid\psi),P)$. \ep

$$
\ast
$$

\noindent {\em Proof of Proposition 4.}
Put
$\mymod{\psi}=\Set{\omega_1,\dots,\omega_k}$ and $\mymod{\neg\psi}=\Set{\omega_{k+1},\dots,\omega_n}$, let us denote the variables $P'(\omega_i)$ by $x_i$, and put
\[
\begin{array}{lcl}
   f(x_1,\dots,x_n)&=&\sum_{i=1}^nx_i\log(x_i/P(\omega_i))\\
   g(x_1,\dots,x_n)&=&\sum_{i=1}^kx_i=\lambda\\
   h(x_1,\dots,x_n)&=&\sum_{i=k+1}^nx_i=1-\lambda.
\end{array}
\]
Now put $$L(x_1,\dots,x_n)=\sum_{i=1}^nx_i\log(x_i/P(\omega_i))-\alpha\left[\sum_{i=1}^kx_i-\lambda\right]-\beta\left[\sum_{i=k+1}^nx_i-1+\lambda\right].$$
Then,

\begin{equation*}
   \frac{\partial L}{\partial x_i}=\left\{
\begin{array}{lcl}
   \log(x_i/P(\omega_i)) +1-\alpha, && \mbox{ if } i\leq k\\
   \log(x_i/P(\omega_i)) +1-\beta, && \mbox{ if } i> k\\
\end{array}
   \right.
\end{equation*}
Equating these partial derivatives to 0, we have

\begin{equation*}
   x_i=\left\{
\begin{array}{lcl}
   P(\omega_i)e^{\alpha-1}, && \mbox{ if } i\leq k\\
   P(\omega_i)e^{\beta-1}, && \mbox{ if } i> k\\
\end{array}
   \right.
\end{equation*}
Finally, using the constraints for functions $g$ and $h$ we get
$e^{\alpha-1}=\lambda/P(\psi)$ and $e^{\beta-1}=(1-\lambda)/P(\neg\psi)$. That is $\rlpsi$ is a minimum for $f$ under the restrictions $g=\lambda$ and $h=1-\lambda$.
\ep

\subsection*{B: Satisfiability of the AGM basic postulates}

{\bf (K*1):} the Closure postulate does not always hold. Note that if $\psi\in\Blp$, we have  $\rlpsi=P$, then $\Blp\ast_{ml}\psi=\Blp$. Now take $\Omega=\Set{\omega_1,\dots,\omega_4}$ and let $P$ be the probability measure defined by $P(\omega_i)=0.25$ for $i=1,\dots,4$. Put  $\lambda=0.75$ and
$\psi_1$ and $\psi_2$ such that $\mymod{\psi_1}=\Set{\omega_1,\omega_2,\omega_3}$ and 
$\mymod{\psi_2}=\Set{\omega_2,\omega_3,\omega_4}$. Clearly,
$\psi_1, \psi_2\in \Blp$, but $\psi_1\wedge \psi_2\not\in\Blp$. Thus,
$\Blp$ is not logically closed. However, $\Blp\ast_{ml}\psi_1=\Blp$, therefore $\Blp*\psi_1$ is not closed.

Note that if $\Blp$ is closed and $\psi\in \Blp$, $\Blp\ast_{ml}\psi$ is closed because $\Blp\ast_{ml}\psi=\Blp$. Note also that if $\psi\not\in \Blp$ but 
$P(\psi)<\frac{\tau\lambda}{1-\lambda}$ then, by Theorem~3(i),
$\Blp*\psi$ is logically closed.
\vspace{.2cm}

\noindent{\bf (K*2):} the Success postulate holds. If $\psi\in\Blp$, $\rlpsi=P$. Thus, $\Blp\ast_{ml}\psi=\Blp$. Therefore $\psi\in\Blp\ast_{ml}\psi$. If  $\psi\not\in\Blp$, $\rlpsi(\psi)=\lambda$.  Thus, $\psi\in\Blrp$, that is  $\psi\in\Blp\ast_{ml}\psi$.
\vspace{.2cm}

\noindent{\bf (K*3):} the Inclusion postulate holds. In order to see that we establish the following claim:

\begin{claim}
    If $\varphi\in \Blp\ast_{ml}\psi$ then $(\psi\rightarrow\varphi)\in \Blp$.
\end{claim}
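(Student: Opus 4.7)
The plan is to prove the claim by unpacking the definition of $R^\lambda_\psi(\varphi)\geq\lambda$ and comparing it against $P(\psi\to\varphi)\geq\lambda$, splitting on whether $\psi$ is already in $\Blp$ and, in the non-trivial case, further splitting on the size of $P(\psi)$ relative to $1-\lambda$.

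First, I would dispatch the easy case $\psi\in\Blp$. Here $P(\psi)\geq\lambda$ forces both $\max\{1,\lambda/P(\psi)\}=1$ and $\min\{1,(1-\lambda)/P(\neg\psi)\}=1$, so $R^\lambda_\psi=P$ and hence $\Blp *_{ml}\psi=\Blp$. Since $\mymod{\varphi}\subseteq\mymod{\psi\to\varphi}$, monotonicity of $P$ gives $P(\psi\to\varphi)\geq P(\varphi)\geq\lambda$, yielding $\psi\to\varphi\in\Blp$.

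Next, assume $\psi\notin\Blp$, i.e., $P(\psi)<\lambda$. By Proposition~\ref{RandJ}, $R^\lambda_\psi$ is the Jeffrey conditionalization, so writing $p=P(\psi)$, $a=P(\varphi\wedge\psi)$, $b=P(\varphi\wedge\neg\psi)$, we have
$$R^\lambda_\psi(\varphi)=\frac{\lambda}{p}\,a+\frac{1-\lambda}{1-p}\,b,\qquad P(\psi\to\varphi)=(1-p)+a.$$
If additionally $p\leq 1-\lambda$, then $P(\psi\to\varphi)\geq 1-p\geq\lambda$ trivially. The interesting subcase is $1-\lambda<p<\lambda$, which I would handle by contraposition: suppose $P(\psi\to\varphi)<\lambda$, equivalently $a<\lambda-(1-p)=\lambda-1+p$. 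Using $b\leq 1-p$, a direct estimate gives
$$R^\lambda_\psi(\varphi)<\frac{\lambda}{p}(\lambda-1+p)+(1-\lambda)=\lambda-\frac{\lambda(1-\lambda)}{p}+(1-\lambda),$$
and since $p<\lambda$ we have $\lambda(1-\lambda)/p>1-\lambda$, hence $R^\lambda_\psi(\varphi)<\lambda$, contradicting $\varphi\in\Blp *_{ml}\psi$.

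The main obstacle is the last arithmetic step: one must keep track simultaneously of the upper bound $b\leq 1-p$, the bound $a<\lambda-1+p$ arising from the assumed failure of $P(\psi\to\varphi)\geq\lambda$, and the hypothesis $p<\lambda$ which is exactly what makes the estimate strict. Once the claim is established, the Inclusion postulate $\Blp*_{ml}\psi\subseteq Cn(\Blp\cup\{\psi\})$ follows immediately by modus ponens: $\varphi\in\Blp*_{ml}\psi$ yields $\psi\to\varphi\in\Blp$, so $\Blp\cup\{\psi\}\vdash\varphi$.
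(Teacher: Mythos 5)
Your proof is correct, and in the main case it takes a different route from the paper's. The easy case $\psi\in\Blp$ is handled exactly as in the paper ($\rlpsi=P$, then monotonicity of $P$). For $P(\psi)<\lambda$, however, the paper avoids any further case distinction and any contraposition: it compares $P$ and $\rlpsi$ directly on the formula $\neg\psi\vee\varphi$, computing $P(\neg\psi\vee\varphi)-\rlpsi(\neg\psi\vee\varphi)=(\lambda-P(\psi))\cdot\bigl(1-P(\varphi\mid\psi)\bigr)\geq 0$, and then concludes $P(\psi\to\varphi)\geq\rlpsi(\psi\to\varphi)\geq\rlpsi(\varphi)\geq\lambda$ by monotonicity of the revised probability. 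You instead split further into $P(\psi)\leq 1-\lambda$ (where $P(\psi\to\varphi)\geq 1-P(\psi)\geq\lambda$ needs no hypothesis on $\varphi$) and $1-\lambda<P(\psi)<\lambda$, and in the latter argue contrapositively, bounding $\rlpsi(\varphi)$ above by $\frac{\lambda}{p}(\lambda-1+p)+(1-\lambda)$ and using $p<\lambda$ to force this below $\lambda$; the strictness is legitimate since $a<\lambda-1+p$ is strict and $\lambda/p>0$, and the divisions are safe because $0<p<1$ in that subcase, so the arithmetic checks out. Both arguments are elementary manipulations of the same Jeffrey-conditionalization expression; the paper's version buys a cleaner and slightly stronger intermediate fact (when $P(\psi)<\lambda$, revising by $\psi$ never increases the probability of any formula of the form $\psi\to\varphi$, visible from the factorization of the difference), which removes the case split and the contraposition, whereas your estimate is self-contained and makes explicit exactly which inequalities ($b\leq 1-p$, $a<\lambda-1+p$, $p<\lambda$) drive the conclusion. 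The final step to the Inclusion postulate via modus ponens matches the paper.
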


\begin{proof}
    We have to see  that  $\rlpsi(\varphi) \geq \lambda$ implies $P(\neg \psi \lor \varphi) \geq \lambda$. 

    If $\psi\in \Blp$, we have $\rlpsi=P$. Thus,  $\rlpsi(\varphi) \geq \lambda$ means 
$P(\varphi)\geq\lambda$. Therefore, $P(\neg \psi \lor \varphi)\geq\lambda$. If $\psi\not\in \Blp$, we have $P(\psi)<\lambda$. Moreover,
    observe that
 $P(\neg \psi \lor \varphi)  = P(\neg \psi) + P(\psi \land \varphi)$ and
  $\rlpsi(\neg \psi \lor \varphi)  = \rlpsi(\neg \psi) + \rlpsi(\psi \land \varphi) = (1-\lambda) + \rlpsi(\psi \land \varphi) $. 
  \noindent Therefore, 
   $P(\neg \psi \lor \varphi) - \rlpsi(\neg \psi \lor \varphi) = P(\neg\psi) + P(\psi \land \varphi) - (1-\lambda) - P(\psi \land \varphi)\cdot \lambda/  P(\psi)$
   $= (\lambda - P(\psi))\cdot (1-P(\varphi \mid \psi)) \geq 0$, because the two factors are positive. 
  Hence, $P(\neg \psi \lor \varphi) \geq \rlpsi(\neg \psi \lor \varphi) \geq \rlpsi( \varphi) \geq \lambda$. \qed

\end{proof}

Now, we prove the satisfaction of (K*3). Let $\varphi $ be  a sentence in $\Blp\ast_{ml}\psi$. By the claim, $(\psi\rightarrow\varphi)\in \Blp$. Then by Modus Ponens we have $\varphi\in Cn(\Blp\cup\Set{\psi})$. Thus, $\Blp\ast_{ml}\psi\subseteq Cn(\Blp\cup\Set{\psi})$.

\vspace{.2cm}

\noindent{\bf (K*4):} the Preservation postulate does not hold. 
Let $\Omega=\Set{\omega_1,\dots,\omega_4}$ and let $P$ be the probability measure defined by $P(\omega_1)=1/8$, $P(\omega_2)=P(\omega_3)=1/4$ and $P(\omega_4)=3/8$. Put $\lambda=7/8$. Let $\varphi$ be a formula such that 
$\mymod\varphi=\Set{\omega_2,\omega_3,\omega_4}$. By Theorem~1, $\Blp= {\uparrow}\mymod\varphi$. In particular $\Blp$ is consistent and $\varphi\in\Blp$. Let $\psi$ be a sentence such that $\mymod\psi=\Set{\omega_1,\omega_2}$. Consider $\Blp\ast_{ml}\psi$
This set is by definition $\Blrp$. Note that $\psi$ is consistent with $\Blp$ because $\omega_2\models \Blp$ and $\omega_2\models \psi$. However, 
\[
\rlpsi(\varphi)=\rlpsi(\omega_2)+\rlpsi(\Set{\omega_3,\omega_4})=\frac78\cdot\frac 14 \cdot\left( \frac 38\right)^{-1}+\frac 18=\frac{17}{24}
\]
But $\frac{17}{24}<\frac 78$. Thus, $\varphi\not\in\Blrp$ and therefore, (K*4) is violated.
\vspace{.2cm}

\noindent{\bf (K*5):} the consistency postulate does not hold. The example given to show that (K*1) fails can be used to prove the failure of (K*5): 
$\psi_1$ is consistent but $\Blp\ast_{ml}\psi_1=\Blp$ is not consistent because clearly there is no $\omega\in\Omega$ such that $\omega\models\Blp$. 
However if $\psi$  is consistent and $P(\psi)<\frac{\tau\lambda}{1-\lambda}$ then, by Theorem~3(i),
$\Blp\ast_{ml}\psi$ is consistent (it is a filter).

\vspace{.2cm}

\noindent{\bf (K*6):} the Extensionality postulate holds. Suppose that $\vdash \psi_1\leftrightarrow\psi_2$. Then $\rlpsiuno=\rlpsidos$, therefore $\Blp\ast_{ml}\psi_1=\Blp\ast_{ml}\psi_2$.

\end{document}